\let\NAT@parse\undefined
\DeclareMathOperator*{\argmax}{arg\,max}
\pgfplotsset{compat=newest}
\pgfplotsset{plot coordinates/math parser=false}
\newlength\figureheight
\newlength\subfiguresheight
\newlength\figurewidth
\newtheorem{theorem}{Theorem}
\newtheorem{corollary}{Corollary}[theorem]
\newtheorem{remark}{Remark}
\newtheorem{definition}{Definition}
\newif\ifextended
\newif\ifshowdevelopment
\newcommand{\ifdev}[1]{%
  \ifshowdevelopment%
    #1%
  \else\fi%
}
\newcommand{\micah}[1]{\ifdev{\textcolor{purple}{#1}}}
\newcommand{\skyler}[1]{\ifdev{\textcolor{Fuchsia}{#1}}}
\newcommand{\pixelconstant}{\alpha}
\newcommand{\spacefont}[1]{{\mathbb{#1}}}
\newcommand{\real}{{\spacefont{R}}}
\newcommand{\setsystemfont}[1]{{\mathscr{#1}}}
\newcommand{\ground}{\Omega}
\newcommand{\independence}{\setsystemfont{I}}
\newcommand{\block}{\setsystemfont{U}}
\newcommand{\setfun}{g}
\newcommand{\realfun}{\psi}
\newcommand{\objective}{\setfun^\mathrm{obj}}
\newcommand{\viewreward}{\setfun^\mathrm{view}}
\newcommand{\pathreward}{\setfun^\mathrm{path}}
\newcommand{\pixelterm}{\setfun^\mathrm{pixels}}
\newcommand{\pathweight}{w_\mathrm{p}}
\newcommand{\opt}{{\mathrm{opt}}}
\newcommand{\greedy}{{\mathrm{g}}}
\newcommand{\multiround}{{\mathrm{mr}}}
\newcommand{\numrobots}{{N^\mathrm{r}}}
\newcommand{\robotstates}{{\mathcal{X}^\mathrm{r}}}
\newcommand{\numactors}{{N^\mathrm{a}}}
\newcommand{\numrounds}{{N^\multiround}}
\newcommand{\robots}{{\mathcal{R}}}
\newcommand{\actors}{{\mathcal{A}}}
\newcommand{\faces}{{\mathcal{F}}}
\newcommand{\timespace}{{\mathcal{T}}}
\newcommand{\yaw}{{\theta}}
\newcommand{\declination}{{\phi}}
\newcommand{\viewspan}{{\gamma}}
\title{\huge
Multi-Robot Planning for Filming Groups of Moving Actors Leveraging Submodularity and Pixel Density
}
\author{Skyler Hughes$^{1}$, Rebecca Martin$^{2}$, Micah Corah$^{3}$, and Sebastian Scherer$^{2}$% <-this % stops a space
\thanks{$^{1}$Skyler Hughes is an independent researcher.
  Skyler was sponsored by the NSF REU program as a part of the
  Robotics Institute Summer Scholars (RISS) program at Carnegie Mellon
  University.
  {\tt\small skyler\_hughes@outlook.com}
}
\thanks{$^{2}$Rebecca Martin and Sebastian Scherer are affiliated
  with the Airlab in the Robotics Institute at Carnegie Mellon University.
  {\tt\small \{rebecca2, basti\}@andrew.cmu.edu}
}%
\thanks{$^{3}$Micah Corah is affiliated
  with the Department of Computer Science at the Colorado School of Mines.
  Micah primarily contributed to this work while a postdoc at CMU.
  {\tt\small micah.corah@mines.edu}
}%
\thanks{
  This work is supported by the National Science Foundation (NSF) under Grant
  No. 2024173.
}
% \thanks{$^{2}$Bernard D. Researcheris with the Department of Electrical Engineering, Wright State University,
%         Dayton, OH 45435, USA
%         {\tt\small b.d.researcher@ieee.org}}%
}
\tikzset{every picture/.style={line width=0.75pt}} %set default line width to 0.75pt
\begin{document}
% \linenumbers % Uncomment this to enable line numbers in the peer review

\maketitle
\thispagestyle{empty}
\pagestyle{empty}

%%%%%%%%%%%%%%%%%%%%%%%%%%%%%%%%%%%%%%%%%%%%%%%%%%%%%%%%%%%%%%%%%%%%%%%%%%%%%%%%
\begin{abstract}
Observing and filming a group of moving actors with a team of aerial robots
is a challenging problem that combines elements of
multi-robot coordination,
coverage,
and view planning.
A single camera may observe multiple actors at once, and a robot team may
observe individual actors from multiple views.
As actors move about, groups may split, merge, and reform, and robots filming
these actors should be able to adapt smoothly to such changes in actor formations.
Rather than adopt an approach based on explicit formations or assignments, we propose an
approach based on optimizing views directly.
We model actors as moving polyhedra and compute approximate pixel densities for
each face and camera view.
Then, we propose an objective that exhibits diminishing
returns as pixel densities increase from repeated observation.
This gives rise to a multi-robot perception planning problem that we solve via
a combination of value iteration and greedy submodular maximization.
We evaluate our approach on challenging scenarios modeled after various
social behaviors and featuring different numbers of robots and actors and observe
that robot assignments and formations arise implicitly given the movements of
groups of actors.
Simulation results demonstrate that our approach consistently outperforms
baselines,
and in addition to performing well
with the planner's approximation of pixel densities our approach also performs
comparably for evaluation based on rendered views.
Overall, the \emph{multi-round} variant of the sequential planner we propose
meets (within 1\%) or exceeds \emph{formation} and \emph{assignment} baselines in all
scenarios.
\end{abstract}

\begin{figure}[t]
% \begin{figure*}[htbp]
    \centering
    \tikzset{every picture/.style={line width=0.5pt}} %set default line width to 0.75pt
    \input{figures/filming_cartoon.tex}
    \caption{A team of robots work together to film an unstructured scene with multiple moving actors splitting and merging.}
    \label{fig:my_label}
\end{figure}

%%%%%%%%%%%%%%%%%%%%%%%%%%%%%%%%%%%%%%%%%%%%%%%%%%%%%%%%%%%%%%%%%%%%%%%%%%%%%%%%
\section{Introduction}

%\assignment{Rebecca: Revise according to Motivation, challenges, SoA, gap, approach; Adi, look through how related work is discussed, fill in details and gaps (i.e. SoA and gap). Micah, will review and comment.}

% The use of unmanned aerial vehicles (UAVs) has only continued to grow in recent years, and the technology has penetrated many industries including construction, inspection, and entertainment. In the entertainment industry drone technology is used at the amateur and professional levels for aerial cinematography, as well as part of large coordinated light shows. In the construction industry UAVs have been used to inspect structures from hard to reach or unsafe vantage points allowing for better inspection coverage and safety guarantees. Recently the advent of drone photogrammetry software such as PIX4D allow for accurate site reconstruction using multiple drone images.

Uncrewed aerial vehicles (UAVs) are widely applicable as mobile sensing and
camera platforms.
UAVs provide the ability to position a camera anywhere in 3D space, opening up
the door to many possibilities across cinematography, inspection, and search and
rescue.
Because of this, UAVs are uniquely suited to capture complex scenarios such
as team sports or animal behaviors.
Additionally, when operating in a team, UAVs are able to capture multiple
viewpoints simultaneously which can enable
filming~\citep{Bucker_Bonatti_Scherer_2021}
or
reconstructing~\citep{Ho_Jong_Freeman_Rao_Bonatti_Scherer_2021,saini2019markerless,aircaprl}
one or more people from multiple perspectives.
%In many of these applications, it is desirable to acquire multiple viewpoints of the same object simultaneously through the use of multiple UAVs. Across these applications the need for teams of robots that are capable of both moving target tracking as well as achieving diversity in viewpoints for cinematic or reconstruction purposes is growing.
%However, such unscripted scenarios are very difficult to model as there are
%multiple sources of uncertainty present, from the actors being filmed and the
%environment.
Filming an unstructured group of actors introduces additional
complexity---groups may split, spread out, and reform in various ways.
This complexity can be ameliorated when this group motion is structured.
Groups that move as a unit could be abstracted as a single actor and filmed by a
team of UAVs flying in
formation~\citep{Ho_Jong_Freeman_Rao_Bonatti_Scherer_2021,saini2019markerless}.
Alternatively, an assignment scheme~\citep{buckman2019partial} could match UAVs
to individual actors or groups that move together.
Yet, groups of people in relevant settings---such as a team sport, a dance,
a race---may frequently split, rejoin, and reorganize in ways that are prone to
break formation and assignment schemes.
This motivates development of systems that can optimize the robots' collective
views more directly.

\subsection{Related work}

% Talk about things that you are going to be tackling
% I will be attempting to deal with exploration, multi target tracking, and information based objectives

\begin{comment}
\micah{Write paragraph on view planning and surfaces and reconstruction (e.g. Roberts, Qingyuan)}

\begin{itemize}
  \item Surface inspection and exploration
  \item Bircher exploration and surface exploration (Can add the Lauri paper
    here)
  \item Soohwan Song: Surface-based Exploration for Autonomous 3D Modeling
  \item Yoder and Scherer Autonomous Exploration for Infrastructure Modeling with a Micro Aerial Vehicle (builds a very similar objective for exploration)
  %\item Tracking (My paper and Roberts)
\end{itemize}
\end{comment}

A large part of this work revolves around design of an objective for filming
groups of moving actors with multiple robots.
Mapping or exploring unknown environments is often closely tied to
reconstruction
(e.g. in the form of a mesh).
These works differ from our approach in that they typically represent the
environment with an occupancy grid, but objectives that capture observations of
surfaces or that include terms based on distance or incidence angle have similar
intent as what we propose~\citep{yoder2016fsr,bircher2018receding}.
Papers that focus on reconstruction of known static or dynamic
environments~\cite{maboudi2023review,song2021tro,Roberts_Shah_Dey_Truong_Sinha_Kapoor_Hanrahan_Joshi_2017,jiang_onboard_2023}
are also closely related to our approach.
From the perspective of perceptual reward,
\citet{Roberts_Shah_Dey_Truong_Sinha_Kapoor_Hanrahan_Joshi_2017}
propose a submodular objective based on covering hemispheres of viewing angles
for each face with increasing hemispherical coverage given to closer views.
An objective such as this could be applied to our
setting by summing over time-steps and treating each instant like a static
scene.
The approach we propose does not directly reward different viewing angles of
individual surface elements but
instead seeks to maximize the collective density of pixels,
following~\citep{jiang_onboard_2023}.
Additionally, like
\citet{Roberts_Shah_Dey_Truong_Sinha_Kapoor_Hanrahan_Joshi_2017}, we propose a
formulation based on a submodular objective.
Our Square-Root-PPA objective is a submodular variation of
the Pixel-Per-Area (PPA) objective by~\citet{jiang_onboard_2023}.

Given the design of the perception objective, the robots must plan to
maximize that objective.
Submodular optimization~\citep{nemhauser1978,fisher1978}
can solve many multi-robot perception, sensing and coverage problems
with guarantees on
suboptimality~\citep{hollinger2009ijrr,Singh_Krause_Guestrin_Kaiser_2009,atanasov2015icra,schlotfeldt2021tro}.
Often, this takes the form of a guarantee that the worst case perception
reward will be no worse than half of optimal~\citep{fisher1978}.
Then, like \citet{Bucker_Bonatti_Scherer_2021} we solve the single-robot
sub-problems optimally with value iteration on a directed graph which is
possible because view rewards for individual robots form a sum.
Because rewards for individual robots are additive and not submodular, methods for
single-robot informative (submodular) path planning are not relevant to this
single robot
subproblem~\citep{zhang2016aaai,Singh_Krause_Guestrin_Kaiser_2009,chekuri2005},
and submodularity is only relevant to the multi-robot aspect.
Alternatively, other methods for solving multi-robot active perception problems
such as Dec-MCTS~\citep{best2019ijrr} could be applicable.
However, solving the single-robot perception planning problem by methods like
Monte-Carlo tree
search~\citep{best2019ijrr,Corah_Michael_2019,corah2021iros,lauri2015ras} would
not be necessary for the same reason as before.

Finally, the aforementioned methods optimize routes and views directly.
We find that such methods provide flexibility in movement and capacity to
cover different actors at different times.
However, existing methods for filming and reconstruction rely on
formations~\citep{Ho_Jong_Freeman_Rao_Bonatti_Scherer_2021,saini2019markerless}
or controllers centered on the subjects being
filmed~\citep{xu2022iros,Bucker_Bonatti_Scherer_2021}.
Naively extending these approaches to the multi-robot setting
may not behave well if the people being filmed do not move as like one person.

\begin{comment}
In sensor planning schemes submodularity has proven to be a powerful
technique for solving informative planning problems such reconstruction,
coverage, or tracking \cite{corah2020phd, Corah_Michael_2019,
Roberts_Shah_Dey_Truong_Sinha_Kapoor_Hanrahan_Joshi_2017}, but few works seek to
provide realistic sensor models or objectives for the purpose of filming or
reconstructing moving targets.
\citet{Roberts_Shah_Dey_Truong_Sinha_Kapoor_Hanrahan_Joshi_2017} utilize objective submodularity and is capable of global multi-view coverage optimization for static building reconstruction.
\citet{corah2020phd} utilize submodularity and related mathematical properties to derive a number of efficient multi-robot coordination schemes built on top of sequential greedy assignment (SGA) \cite{corah2020phd}, but focuses on simplified sensor models that does not consider view-diversity.
\end{comment}

\subsection{Contributions}
%\micah{Paragraph on contributions}

This work develops methods for perception planning with application to
videography that coordinate multiple UAVs to obtain diverse views of multiple
moving actors.
We present an objective that approximates pixel densities over the surfaces of
the actors that exhibits diminishing returns with repeat observation.
This extends the pixel-per-area (PPA) density approximation
that
\citet{jiang_onboard_2023}
propose with a mechanism for obtaining diminishing returns
(producing Square-Root-PPA or SRPPA).
This, in turn, enables multi-robot application.
Likewise, our choice of the SRPPA objective extends the planning approach by
\citet{Bucker_Bonatti_Scherer_2021} to the multi-actor case by enabling
reasoning about the quality of views of different actors.
Moreover, our analysis proves that SRPPA is monotonic and submodular.
This enables application of submodular maximization methods to
optimize views across the multi-robot team.
The results compare submodular maximization methods
to formation and assignment baselines in a variety of scenarios that simulate
challenging behaviors (splitting, merging, reorganizing) and social
scenarios (public speaking, a race).
Our approach meets (within 1\%) or exceeds the performance of our baselines for
all scenarios
and behaves intuitively such as by implicitly producing formations or
assignments.
The implementation and code to run experiments are also available
online.\footnote{%
  \url{https://github.com/castacks/MultiDroneMultiActorFilming}
}

Additionally, an extension of this work introduces non-collision
constraints between robots and an objective implementation based on rendered
views~\citep{suresh2024iros}.
While the key contribution by~\citet{suresh2024iros} is application to a more
realistic setting,
this paper's unique contribution includes general analysis of submodularity of
SRPPA and similar objectives---this is non-trivial because SRPPA does not
readily reduce to a form of coverage---and evaluation in an unconstrained
setting where suboptimality guarantees hold strictly and with scenarios that
emphasize the role of cooperative view planning.

\micah{Add citations to new Schwager papers}

\micah{Cite the Guiseppe Loianno paper that develops a similar reprojection
objective.}

%Our framework produces planning solutions that maximize an intuitive textit{view
%quality metric} that captures view-diversity, actor coverage, and actor size
%reconstruction objectives. Sequential planning and a carefully constructed
%objective function allow us to avoid exponential complexity in planning common
%in multi-robot problems.

\section{Background}
\label{section:background}

In this work, we apply methods for submodular optimization to coordinate
multi-robot teams and for analysis of our perception objective.
To begin, consider a set $\ground = \bigcup_{r\in 1:N} \block_r$ that might
represent possible assignments of actions to robots where each $\block_r$
represents a (disjoint) local set of actions associated with each robot
$r \in \robots = \{1,\ldots,\numrobots\}$.
The following subsections
%and much of this paper
build tools that operate on these sets when solving planning problems.

\subsection{Submodularity and monotonicity}
\label{section:background_submodularity}

The objectives in the perception planning problems we study
are \emph{set functions}, functions that
map sets of actions to real numbers
$\setfun : 2^\ground \rightarrow \real$, and we interpret the value of a set
function as a reward.
We will be interested in maximizing set functions subject to certain
conditions.
Generally, we consider set functions that are
\emph{normalized}
$\setfun(\emptyset)=0$,
\emph{monotonic}
$\setfun(A) \geq \setfun(B)$ for $B \subseteq A \subseteq \ground$,
and \emph{submodular}
$\setfun(A \cup c) - \setfun(A) \leq \setfun(B \cup c) - \setfun(B)$
where $c \in \ground \setminus A$.
Monotonicity expresses the notion that more observations produce more
reward or equivalently that the discrete derivative is positive.
Submodularity is a monotonicity condition on the second discrete
derivative~\citep{foldes2005} and expresses the notion that marginal gains
decrease given more prior observations.
Additionally, we will take some liberties with notation such as to replace
unions with commas or to implicitly wrap elements in sets.
For example, we abbreviate marginal gains as follows
$\setfun(c|A) = \setfun(A,c) - \setfun(A) = \setfun(A \cup \{c\}) - \setfun(A)$
which we read as ``the marginal gain for $c$ given $A$.''

\subsection{Submodular optimization for multi-robot coordination}
\label{sec:background_submodular_maximization}

In the problems we study, valid plans for the multi-robot team consist of
at most one action from each robot's local set.
That is
$X \in \independence =
\{X \subseteq \ground \mid 1 \geq |X \cap \block_r|, \forall r\in \robots\}$
which forms a
\emph{simple partition matroid}~\citep[Sec.~39.4]{schrijver2003}.
Thus, we wish to solve optimization problems of the form
\begin{align}
  X^{\opt} \in \argmax_{x \in \independence}
  \objective(X)
  \label{eq:submodular_maximization}
\end{align}
where $\objective$ is normalized, monotonic, and submodular.
Greedy algorithms can solve these problems with various guarantees of solutions
within a fraction of optimal, and these methods have been applied frequently to
solve planning problems related to perception, coverage, and
search~\citep{Singh_Krause_Guestrin_Kaiser_2009,atanasov2015icra,hollinger2009ijrr}

%\citet{Singh_Krause_Guestrin_Kaiser_2009} were the some of the first to develop sequential planning methods for multi-robot sensing problems based on submodular maximization. SGA is a class of approximation algorithms for solving a variety of monotone and submodular maximization problems that show up frequently in robotics.

\section{Problem formulation}

Consider a team of $\numrobots=|\robots|$ robots  with states
$x_{r,t} \in \robotstates$
for $r\in\robots$, where $\robotstates$ is a subset of the special euclidean group
$SE(2)$, and a set of actions $u_{r,t} \in U_{r,t}$, where $U_{r,t}$ is the
\textit{finite} space of actions available to robot $r$ at time $t \in
\{1,\ldots,T\} = \timespace$.
The robots film the actors $\actors=\{1,\ldots,\numactors\}$
which have states
$y_{a,t} \in SE(3)$, and we refer to an actor's trajectory as
$Y_a = [y_{a,1},\ldots, y_{a,T}]$.
\textbf{Problem:}
Given known (or scripted) actor motions, the task is to select robot
trajectories to maximize the objective $\objective$
which primarily represents the quality of the robots' views of the actors over
the duration of the time horizon.

\subsection{Motion model}
State transitions are governed by the \textit{motion model}:
\begin{equation}
    x_{r, t+1} = f(x_{r,t}, u_{r,t})
    \label{eq:transition_model}
\end{equation}
where $f$ constrains motion to state transitions within distance $D$ of the
current state, with one of eight possible camera orientations.
Additionally, we will refer to the robots' yaw angle as $\yaw_{r,t}$.
Robots are able to rotate clock-wise or counter-clock wise by a $\pi/4$ radians
at each time-step.
The space of control actions $U_{r,t}$ thus encodes a finite list of adjacent
positions and orientations, and we will treat this set as time-varying only to
account for in-valid transitions---in our case we require the robot to remain
in-bounds on a grid.

\subsection{Actor motion and representation}
\label{}

The robots seek to collectively film the actors which follow
\emph{known trajectories} $Y_a$.
The actors (such as people, animals, or cars)
are represented by simple polyhedra, in our case, a capped hexagonal prism.
Each actor $a$ consists of a set of faces $F_a \subseteq \faces$ where
$\faces$ is the total set of faces in the scene.
Faces are parameterized by the normal $\vec{\eta_f}$, the area $A_f$, the
face center position $\vec{p_f}$, and the weight $w_f$.
We define the position and orientation of the faces in $F_a$ relative to the
actor
position $\vec{c_{a,t}}$ and rotation $\theta_{a,t} \in [0, 2\pi]$
(and the corresponding rotation matrix $R_{a,t}$).
\textit{See  Fig.~\ref{fig:scene}}.
Actor trajectories are not constrained in position or orientation.

% We address that in the formation planner section.
%\skyler{How do we address that the formation planner is unconstrained?}.

\begin{figure}
    \centering
    \tikzset{every picture/.style={line width=0.75pt}} %set default line width to 0.75pt
    % \tikzset{every picture/.style={line width=0.75pt}} %set default line width to 0.75pt        
\resizebox{250.0pt}{!}{%
\begin{tikzpicture}[x=0.75pt,y=0.75pt,yscale=-1,xscale=1]
%uncomment if require: \path (0,489); %set diagram left start at 0, and has height of 489

%Straight Lines [id:da7263783483239097] 
\draw    (136.2,178.97) -- (151.93,202.7) ;
%Straight Lines [id:da33483921931846083] 
\draw    (151.93,202.7) -- (200.33,207.5) ;
%Straight Lines [id:da9894114811933117] 
\draw    (200.33,207.5) -- (232.33,189.1) ;
%Straight Lines [id:da5437721180622204] 
\draw    (232.33,189.1) -- (217.13,165.1) ;
%Straight Lines [id:da5602140394414656] 
\draw    (217.13,165.1) -- (169.93,160.3) ;
%Straight Lines [id:da6259787846914757] 
\draw    (136.2,178.97) -- (169.93,160.3) ;
%Straight Lines [id:da14421976160073102] 
\draw    (136.2,178.97) -- (136.73,272.5) ;
%Straight Lines [id:da9811958668366854] 
\draw    (151.93,202.7) -- (151.13,294.5) ;
%Straight Lines [id:da8184353179542907] 
\draw    (136.73,272.5) -- (151.13,294.5) ;
%Straight Lines [id:da41227749135632763] 
\draw    (200.33,207.5) -- (198.87,299.63) ;
%Straight Lines [id:da29653620156461624] 
\draw    (151.13,294.5) -- (198.87,299.63) ;
%Straight Lines [id:da7984745418835757] 
\draw    (232.33,189.1) -- (232.2,283.3) ;
%Straight Lines [id:da175008257523676] 
\draw    (198.87,299.63) -- (232.2,283.3) ;
%Straight Lines [id:da08087419908019355] 
\draw [line width=1.5]    (217.27,244.9) -- (257.77,258.92) ;
\draw [shift={(260.6,259.9)}, rotate = 199.09] [color={rgb, 255:red, 0; green, 0; blue, 0 }  ][line width=1.5]    (17.05,-5.13) .. controls (10.84,-2.18) and (5.16,-0.47) .. (0,0) .. controls (5.16,0.47) and (10.84,2.18) .. (17.05,5.13)   ;
%Shape: Circle [id:dp9511275044723608] 
\draw  [fill={rgb, 255:red, 0; green, 0; blue, 0 }  ,fill opacity=1 ] (179.87,274.3) .. controls (179.87,272.46) and (181.36,270.97) .. (183.2,270.97) .. controls (185.04,270.97) and (186.53,272.46) .. (186.53,274.3) .. controls (186.53,276.14) and (185.04,277.63) .. (183.2,277.63) .. controls (181.36,277.63) and (179.87,276.14) .. (179.87,274.3) -- cycle ;
%Straight Lines [id:da5736797401148357] 
\draw  [dash pattern={on 0.84pt off 2.51pt}]  (232.33,281.77) -- (217.13,257.77) ;
%Straight Lines [id:da3114037728403489] 
\draw  [dash pattern={on 0.84pt off 2.51pt}]  (217.13,257.77) -- (169.93,252.97) ;
%Straight Lines [id:da9340120789120223] 
\draw  [dash pattern={on 0.84pt off 2.51pt}]  (136.2,271.63) -- (169.93,252.97) ;
%Straight Lines [id:da6731648505512615] 
\draw  [dash pattern={on 0.84pt off 2.51pt}]  (170.27,161.03) -- (169.47,252.83) ;
%Straight Lines [id:da10435037034306482] 
\draw  [dash pattern={on 0.84pt off 2.51pt}]  (218.6,165.63) -- (217.13,257.77) ;
%Straight Lines [id:da15556211079920224] 
\draw    (183.2,274.3) -- (215.75,246.21) ;
\draw [shift={(217.27,244.9)}, rotate = 139.21] [color={rgb, 255:red, 0; green, 0; blue, 0 }  ][line width=0.75]    (10.93,-3.29) .. controls (6.95,-1.4) and (3.31,-0.3) .. (0,0) .. controls (3.31,0.3) and (6.95,1.4) .. (10.93,3.29)   ;
%Straight Lines [id:da9620997811648035] 
\draw    (431.57,86.33) -- (455.19,93.12) ;
%Straight Lines [id:da6629017549228573] 
\draw    (455.19,93.12) -- (481.42,88.09) ;
%Straight Lines [id:da042307893358790594] 
\draw    (481.42,88.09) -- (484.09,76.56) ;
%Straight Lines [id:da9815626278613576] 
\draw    (484.09,76.56) -- (460.55,69.59) ;
%Straight Lines [id:da45081903610260987] 
\draw    (460.55,69.59) -- (434.89,74.45) ;
%Straight Lines [id:da2077665017554513] 
\draw    (431.57,86.33) -- (434.89,74.45) ;
%Straight Lines [id:da9679427973823449] 
\draw    (457.28,80.21) -- (427.85,100.66) ;
%Straight Lines [id:da9062602720150255] 
\draw    (457.28,80.21) -- (517.53,85.95) ;
%Straight Lines [id:da725778299396791] 
\draw    (457.28,80.21) -- (400.97,76.4) ;
%Straight Lines [id:da41301169640653956] 
\draw    (457.28,80.21) -- (488.93,61.82) ;
%Shape: Ellipse [id:dp8406056199569254] 
\draw   (428.25,112.58) .. controls (412.17,111.94) and (398.96,106.08) .. (398.74,99.5) .. controls (398.52,92.91) and (411.38,88.1) .. (427.45,88.74) .. controls (443.53,89.38) and (456.74,95.24) .. (456.96,101.82) .. controls (457.18,108.41) and (444.33,113.22) .. (428.25,112.58) -- cycle ;
%Shape: Ellipse [id:dp3620328764168106] 
\draw   (517.93,97.87) .. controls (501.85,97.23) and (488.64,91.37) .. (488.42,84.78) .. controls (488.2,78.2) and (501.05,73.38) .. (517.13,74.03) .. controls (533.21,74.67) and (546.42,80.53) .. (546.64,87.11) .. controls (546.86,93.7) and (534.01,98.51) .. (517.93,97.87) -- cycle ;
%Shape: Ellipse [id:dp8437427712488177] 
\draw   (489.33,73.74) .. controls (473.25,73.09) and (460.04,67.24) .. (459.82,60.65) .. controls (459.6,54.07) and (472.45,49.25) .. (488.53,49.89) .. controls (504.61,50.54) and (517.82,56.4) .. (518.04,62.98) .. controls (518.26,69.56) and (505.4,74.38) .. (489.33,73.74) -- cycle ;
%Shape: Ellipse [id:dp5630051025381619] 
\draw   (401.37,88.33) .. controls (385.29,87.68) and (372.08,81.82) .. (371.86,75.24) .. controls (371.64,68.66) and (384.49,63.84) .. (400.57,64.48) .. controls (416.65,65.13) and (429.86,70.98) .. (430.08,77.57) .. controls (430.3,84.15) and (417.44,88.97) .. (401.37,88.33) -- cycle ;

%Straight Lines [id:da8921167680354958] 
\draw  [dash pattern={on 0.84pt off 2.51pt}]  (457.55,80.69) -- (455,277.67) ;
%Straight Lines [id:da16334820435592534] 
\draw  [dash pattern={on 3.75pt off 3pt on 7.5pt off 1.5pt}]  (457.55,80.69) -- (333.4,191) ;
%Straight Lines [id:da608148388548412] 
\draw [color={rgb, 255:red, 0; green, 0; blue, 0 }  ,draw opacity=1 ] [dash pattern={on 3.75pt off 3pt on 7.5pt off 1.5pt}]  (455.78,271.71) -- (233,160.63) ;
%Shape: Arc [id:dp07663168223971906] 
\draw  [draw opacity=0] (367.98,295.61) .. controls (359.36,287.65) and (357.48,277.36) .. (364.47,267.4) .. controls (371.88,256.82) and (387.58,249.04) .. (405.6,245.77) -- (419.63,276.96) -- cycle ; \draw   (367.98,295.61) .. controls (359.36,287.65) and (357.48,277.36) .. (364.47,267.4) .. controls (371.88,256.82) and (387.58,249.04) .. (405.6,245.77) ;  
%Curve Lines [id:da8774472362568597] 
\draw  [dash pattern={on 4.5pt off 4.5pt}]  (183.2,274.7) .. controls (487.75,234.25) and (110.2,359.4) .. (434.6,358.2) ;
\draw [shift={(434.6,358.2)}, rotate = 179.79] [color={rgb, 255:red, 0; green, 0; blue, 0 }  ][line width=0.75]    (10.93,-3.29) .. controls (6.95,-1.4) and (3.31,-0.3) .. (0,0) .. controls (3.31,0.3) and (6.95,1.4) .. (10.93,3.29)   ;
%Curve Lines [id:da7967923274362136] 
\draw [line width=0.75]  [dash pattern={on 4.5pt off 4.5pt}]  (58.6,212.2) .. controls (182.6,189.8) and (13,297.8) .. (183.2,274.7) ;
%Straight Lines [id:da604071024408285] 
\draw [color={rgb, 255:red, 0; green, 0; blue, 0 }  ,draw opacity=1 ] [dash pattern={on 3.75pt off 3pt on 7.5pt off 1.5pt}]  (455.78,271.71) -- (135.5,363.63) ;
%Shape: Arc [id:dp5005195685611514] 
\draw  [draw opacity=0] (355.93,169.91) .. controls (346.29,158.6) and (338.41,145.45) .. (332.87,130.74) .. controls (326.71,114.36) and (324,97.52) .. (324.4,81.03) -- (456.55,84.19) -- cycle ; \draw   (355.93,169.91) .. controls (346.29,158.6) and (338.41,145.45) .. (332.87,130.74) .. controls (326.71,114.36) and (324,97.52) .. (324.4,81.03) ;  
%Straight Lines [id:da35354899815643215] 
\draw [color={rgb, 255:red, 0; green, 0; blue, 0 }  ,draw opacity=1 ] [dash pattern={on 3.75pt off 3pt on 7.5pt off 1.5pt}]  (457.55,80.69) -- (305,82.13) ;

% Text Node
\draw (462,167.4) node [anchor=north west][inner sep=0.75pt]  [font=\huge]  {$d_{h}$};
% Text Node
\draw (339.5,105.12) node [anchor=north west][inner sep=0.75pt]  [font=\Huge]  {$\phi $};
% Text Node
\draw (435.93,21.2) node [anchor=north west][inner sep=0.75pt]  [font=\huge]  {$p_{r}$};
% Text Node
\draw (150.67,268.83) node [anchor=north west][inner sep=0.75pt]  [font=\Huge]  {$c_{y}$};
% Text Node
\draw (242.47,215.57) node [anchor=north west][inner sep=0.75pt]  [font=\huge]  {$\eta _{f}$};
% Text Node
\draw (364,322.07) node [anchor=north west][inner sep=0.75pt]  [font=\huge]  {$Y_{a}$};
% Text Node
\draw (391,255.8) node [anchor=north west][inner sep=0.75pt]  [font=\Huge]  {$\gamma $};
% Text Node
\draw (166,224.49) node [anchor=north west][inner sep=0.75pt]  [font=\Huge]  {$p_{f}$};

\end{tikzpicture}

}
    \caption{Scene representation: Each actor $a\in\actors$
      is modeled as a hexagonal prism and follows a trajectory $Y_a$ in
      the plane.
      Robots move on a grid at height $d_h$ and carry a camera with
      field-of-view $\viewspan$ and declination $\declination$.
      \micah{Update this caption.}
    }
    \label{fig:scene}
\end{figure}

% \begin{figure}
%     \centering
%     \tikzset{every picture/.style={line width=0.75pt}} %set default line width to 0.75pt
%     \input{figures/grid_action_model.tex}
%     \caption{Motion Model diagram. Showing a finite movement horizon in red, where each red point represents a reachable state from the current UAV state. }
%     \label{fig:motion_model}
% \end{figure}
\subsection{Camera and sensor model}

% Describe drone sensing model
% Describe observations
The robots are equipped with cameras with which they observe and film the
actors.
For the purpose of planning we adopt a simplified quasi-two-dimensional camera model.
In this model,
cameras face forward and observe over a horizontal viewing angle $\viewspan$ (field-of-view) regardless of distance.
While we do not model camera view frustums or occlusions,
we do account for $d_h$, the height of the robots above the actors, in distance calculations between robots and actor faces.
Additionally, we cull back-sides of faces by
checking whether the face normal is facing away from the camera.
Otherwise, we do not do raytracing or account for occlusions between actors.
Later, we will model view quality based on an approximation of the size of the
face projected into the image sensor as determined by the distance and a check
regarding whether a given face is \emph{in view}.
We also compare this simple camera model against a more realistic one in the
rendering based evaluations in Sec.~\ref{sec:rendering_evaluation}.

%As part of their available actions $U_{r,t}$ robots are capable of rotating either clock-wise(CW) or counter-clock-wise (CCW) in order to face in one of 8 cardinal directions.

%\micah{Can also mention that we do not do ray tracing to handle occlusion; faces are not in view if facing away from the camera \& how do we determine when a face is in view (or how much goes to later sections?); not sure robot yaw belongs in this section, but could mention forward facing camera, that pitch is fixed, and that robot yaw is discussed in such and such as section; might also mention that we use parameters for the x4s}

\subsection{Assumptions}

For the purpose of this paper, we assume \emph{centralized computation}---this
is reasonable for many potential settings for videography systems
such as sports arenas that feature controlled environments and can be
instrumented with suitable communications equipment.
Alternatively, there are also applicable distributed solvers that are amenable
to constraints on network connectivity~\citep{corah2018cdc,xu2022cdc}.
Additionally, \emph{Robot and actor positions and orientations are known}, and
we require some additional instrumentation (e.g.
GPS~\citep{Ho_Jong_Freeman_Rao_Bonatti_Scherer_2021,bonatti2020jfr,saini2019markerless})
to track position.
Additionally, \emph{actor motions are known or scripted}.
Although there are scripted scenarios that are relevant to this work,
predictions for unscripted scenarios
(e.g. filming team sports)
may only be accurate over a short horizon
(e.g. based on a velocity output from a Kalman filter~\citep{bonatti2019iros})
or require learned predictions.
Likewise, objective evaluation (Sec.~\ref{sec:objective}) in expectation would
not compromise our analysis but could significantly increase computation time.
Finally, we \emph{ignore collisions and occlusions}; this work focuses on the
design of the SRPPA objective and behavior of the submodular optimization
problem.
Collision and occlusion-aware planning is the focus of our succeeding work
\citep{suresh2024iros}.

\subsection{Objective function}
\label{sec:objective}

The design of the perception objective $\objective$ for the videography task
captures the following intuition:
\begin{itemize}
  \item \emph{Maximum actor size}: Select camera views to maximize size of
    actors in the field of view
  \item \emph{Maximum actor coverage}: Robots should collectively keep all actors in view at all times and with uniform coverage quality
  \item \emph{View diversity}: Prefer views that cover different sides of an actor versus many views of the same side
  \item \emph{Actor centering}: Prefer centering actors in field of view
  \item \emph{Operator preference}: Prioritize different actors (speaker vs. crowd)
    or parts of their surfaces (face vs. body)
\end{itemize}

Following the description of the submodular maximization problem in
Sec.~\ref{sec:background_submodular_maximization},
we define the robots' local action sets via tuples representing the assignment
of a sequence of valid control actions
$\block_r = \{(r,u_{r,1:T})\mid u_{r,t} \in U_{r,t}\}$ to robot $r\in\robots$.
Based on these sequences of control actions,
we define the objective as a sum of \emph{path} $\pathreward$
and
\emph{view}
$\viewreward_{a,t}$
\emph{rewards} over actors and time-steps
\begin{align}
  \objective(X) &=
  \pathreward(X) +
  \sum_{t = 1}^{T}
  \sum_{f \in F_a,\,a \in \actors}
  \viewreward_{f,t}(X)
  \label{eq:objective}
\end{align}
for $X\subseteq\ground$ and given the predicted actor trajectories $Y_a$.
We call this the Square-Root PPA (Pixels-Per-Area~\citep{jiang_onboard_2023})
or SRPPA objective for reasons that will soon be clear.
The path reward $\pathreward(X) = \sum_{x\in X} \pathweight(x)$
is a reward ($\pathweight(x)\geq 0$) on the robots' paths.\footnote{%
  In our implementation, we define $\pathreward$
  to provide a small reward for each time-step where a robots' position, orientation or both do not
  change.
}

The function $\viewreward$ represents the \textit{total view quality}
for a face at a given time.
In order to satisfy suboptimality guarantees afforded by sequential
greedy
planning~\citep{nemhauser1978,fisher1978}
this view reward must be \textit{submodular} and \textit{monotonic}.
The following expression
applies a square-root in a way which will satisfy these requirements
\begin{align}
  \viewreward_{f,t}(X) = w_{f}A_{f}\sqrt{\pixelterm_{f,t}(X)}
    \label{eq:view_quality}
\end{align}
where $\pixelterm$ accumulates the sensing quality for a particular face $f$ of
an actor across all robots
and $w_{f}\geq 0$
is a designer weight which can prioritize actors (e.g. a speaker)
or individual mesh faces (e.g. a person's front or face).

The function $\pixelterm$ approximates the cumulative pixel density for a given
face, and we will additionally
weight this based on camera alignment.
However, we must first define a few more terms.
Defining the position of the robot as
$\vec{p_r}=[x_r,y_r,d_h]^\mathrm{T}$, the relative position of a particular face $f$ on actor $a$ is:
\begin{align}
  \vec{d_{r,f,t}} = (\vec{c_{a,t}} + R_{a,t}\vec{p_f}) - \vec{p_r}.
  \label{eq:dist}
\end{align}
Then, define the rotated face normal as
$\vec{\eta} = R_{a,t}\vec{\eta_{f}}$.
The weighted pixel density is as follows:
\begin{align}
  \begin{split}
    &\pixelterm_{f,t}(X)
    \\
    &=
    \sum_{(r,u_{r,1:T})\in X}
    \pixelconstant
    \text{INVIEW}_{f}(x_{r,t})
    \frac{(\vec{d_{r,f,t}} \cdot \vec{\eta})}{||d_{r,f,t}||^3}
    \frac{(\vec{d_{r,f,t}} \cdot \vec{\eta_r})}{||d_{r,f,t}||}
  \end{split}
    \label{eq:pixelterm}
\end{align}
where $||\cdot||$ is the 2-norm, $\text{INVIEW}$ returns whether a given face
is in the field of view and facing the robot and zero otherwise;
$\vec{\eta_r}$ is a unit vector representing the robot heading;
and $\pixelconstant$ is the number of pixels per unit area at one
meter.\footnote{%
  Referring to \eqref{eq:view_quality}, $\pixelconstant$
  acts as a scaling factor and does not affect optimality.
}
Here, the first ratio corresponds to
computation of pixels-per-area,\footnote{%
  The dot product computes the alignment of the face with the camera (as in a
  computation of flux).
  The denominator normalizes the distance and accounts for projected area
  diminishing with the square of distance.
  %\micah{This explanation could be better.}
}
and the second forms a weight that encourages
robots to center faces in the camera view.

\begin{remark}[Intuition for perception objective]
  In \eqref{eq:pixelterm}, $\pixelterm$ approximates the cumulative density of
  pixels
  (weighted to encourage centering)
  for all robots observing each face.
  If we only maximized the sum of these terms, there would be no diminishing
  returns, and robots might maximize rewards by observing only a single actor or
  face.
  %Alternatively, we could create a coverage (or coverage-like) objective by
  %threshold on pixel density (individually or cumulatively) to make a binary
  %decision on whether part of a surface is sufficiently covered at a given time.
  %Instead,
  %We opt for a design that allows for variation in coverage quality.
  The square-root in \eqref{eq:view_quality} introduces diminishing returns
  because growth of the square root slows as $\pixelterm$ increases
  (analysis in Sec.~\ref{sec:analysis} states this formally).
  Generally, this encourages robots to cover actors uniformly at a moderate
  level versus individually at high levels.
\end{remark}

%\subsection{Trajectory smoothness}
%\micah{Skyler: Could you add some text describing the smoothness term?}
%\skyler{This might not even be necessary anymore now that we have reward for "centeredness"}

%We add a small reward for remaining stationary at each state transition. This bonus reward is very small, and provides a "tie-break" between states of equivalent reward. The net effect is that trajectories appear smoother in cases in which the robot has multiple states of equivalent reward to choose from.

\section{Planning approach}

% Not sure we should have this here but I think its worth introducing the basic
% idea behind having a single robot planner vs coordination

In this work, we seek to maximize $\objective$ \eqref{eq:objective} by optimizing
robot trajectories given a set of actor trajectories.
To simplify the problem we define two distinct planning subproblems:
a single-robot planning subproblem, and a coordination subproblem.
In the single-robot planning step, we seek an optimal trajectory for a single
robot given actor trajectories.
In the coordination step, we maximize overall sensing quality across all robots
by sequencing multiple single-robot planning steps.
%in which each robot is informed of the prior robot trajectories.
Our planning approach is similar to that of
\citet{Bucker_Bonatti_Scherer_2021}.
%---we contribute a detailed approach to
%objective design which enables application to the multi-actor setting.

\subsection{Single robot planning}
\label{sec:single-robot}

We apply backward value iteration~\citep[Sec.~2.3.1.1]{lavalle2006planning}
to solve the single robot planning problem optimally similarly as other works
involving perception planning~\citep{%
Ho_Jong_Freeman_Rao_Bonatti_Scherer_2021,Bucker_Bonatti_Scherer_2021,atanasov2015icra}.
Backward value iteration operates by taking a single pass
over $\robotstates \times \timespace$ going backward in time.
For every state time pair, backward value iteration iterates over control
actions and selects the one that maximizes the immediate reward plus the reward
to the end of the horizon from the next state
(which has already been visited and computed).
By doing so, this produces a plan for a single robot that maximizes the perception objective
$\objective$ directly or marginally given other robots prior decisions as in the
next section.

\subsection{Sequential planning and coordination}
\label{sec:sequential}

We coordinate robots via sequential greedy planning.
Through this process, robots each plan as described in
Sec.~\ref{sec:single-robot} and maximize $\objective$ conditional on the prior
robots' selections.
Thus, the robots produce
a greedy solution $X^\greedy = \{x_1^\greedy,\ldots,x_r^\greedy\}$
by solving
\begin{align}
  x^\greedy_r \in \argmax_{x\in\block_r} \objective(x|X_{1:r-1}^\greedy)
  \label{eq:sequential_greedy}
\end{align}
in sequence via value iteration where
$X_{1:r-1}^\greedy$ is the set of prior selections.
\citet{fisher1978} proved the following suboptimality guarantee:
if $\objective$ is monotonic, submodular, and normalized, then
$\objective(X^\greedy) \geq \frac{1}{2} \objective(X^\greedy)$
given that $X^\opt$ is the optimal solution to
\eqref{eq:submodular_maximization}.

\subsection{Multiple rounds of greedy planning}
\label{sec:multi-round-greedy}

Although a single pass of greedy planning guarantees solutions no worse than
half of optimal, we are often able to improve these solutions in practice.
We adopt a similar approach as \citet[Sec.~4.2]{mccammon2021jfr} do for a surveying task.
Specifically, robots replan by solving a slightly different single-robot
subproblem
\begin{align}
  x^\multiround_r \in \argmax_{x\in\block_r}
  \objective(x|\hat X \setminus \block_r)
  \label{eq:multi_round_greedy}
\end{align}
where $\hat X \subseteq \ground$ is the solution we wish to improve.
Here, we modify \eqref{eq:sequential_greedy} by removing any assignment to
$r\in\robots$ to allow that robot to replan.
When planning in multiple rounds robots solve \eqref{eq:multi_round_greedy} in
$\numrounds$ passes over $\robots$.
By this process, robots first produce a solution equivalent to $X^\greedy$
and in subsequent rounds may improve solution quality to produce
$X^\multiround$.
This process cannot produce a worse solution, but there is no
guarantee of an improved or optimal solution either.
%but there is no guarantee that any individual robot on its own can improve the
%solution quality.

\section{Analysis}
\label{sec:analysis}

This section introduces the analysis of the monotonicity properties of the
objective and applies that to guarantee bounded solution quality.

\begin{restatable}[Monotonicity properties of SRPPA]
  {theorem}{monotonicitiesofsrppa}
  The SRPPA objective $\objective$ from \eqref{eq:objective} is normalized,
  monotonic, and submodular.
  Moreover, SRPPA satisfies alternating monotonicity conditions and is
  $n$-increasing for \emph{odd} values of $n$ or else $n$-decreasing if
  \emph{even}.
  \label{theorem:submodular_objective}
\end{restatable}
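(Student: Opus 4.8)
The plan is to exploit the layered structure of the objective: a modular inner quantity, composed with the concave square root, and aggregated by a nonnegative linear combination together with a modular path term. First I would observe that each pixel term is \emph{modular}. Writing $\pixelterm_{f,t}(X) = \sum_{x \in X} \rho_{f,t}(x)$, where $\rho_{f,t}(x) = \pixelconstant\,\text{INVIEW}_f(x_{r,t})\,\frac{(\vec{d_{r,f,t}}\cdot\vec{\eta})}{\|d_{r,f,t}\|^3}\,\frac{(\vec{d_{r,f,t}}\cdot\vec{\eta_r})}{\|d_{r,f,t}\|}$ depends only on the single element $x=(r,u_{r,1:T})$ through its induced state $x_{r,t}$, the term is additive over $X$; the back-face culling built into $\text{INVIEW}$ guarantees $\rho_{f,t}(x)\ge 0$, so $\pixelterm_{f,t}$ is a nonnegative modular set function with $\pixelterm_{f,t}(\emptyset)=0$. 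Consequently $\objective = \pathreward + \sum_{f,t} w_f A_f\,\phi(\pixelterm_{f,t})$ with $\phi(s)=\sqrt{s}$, where $\pathreward$ is itself modular with nonnegative weights $\pathweight(x)\ge 0$.

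The central reduction is to convert discrete derivatives of $\phi\circ\pixelterm_{f,t}$ into ordinary finite differences of $\phi$. By modularity, adjoining an element $c$ merely shifts the argument of $\phi$ by $\delta_c := \pixelterm_{f,t}(\{c\}) = \rho_{f,t}(c)\ge 0$, so by induction the $n$-th order marginal satisfies $(\phi\circ\pixelterm_{f,t})(c_1,\dots,c_n\mid A) = \Delta_{\delta_{c_1}}\!\cdots\Delta_{\delta_{c_n}}\,\phi(s)$ evaluated at $s=\pixelterm_{f,t}(A)$, a mixed forward difference with nonnegative steps (here $\Delta_\delta\phi(s)=\phi(s+\delta)-\phi(s)$). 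I would then invoke the integral identity $\Delta_{\delta_1}\!\cdots\Delta_{\delta_n}\phi(s) = \int_0^{\delta_1}\!\cdots\int_0^{\delta_n}\phi^{(n)}\!\big(s+\textstyle\sum_i t_i\big)\,dt_n\cdots dt_1$, proved by repeated application of the fundamental theorem of calculus, which reduces the sign of the $n$-th discrete derivative to the sign of $\phi^{(n)}$ on the relevant interval. Since $\phi^{(n)}(s)=\big(\prod_{k=0}^{n-1}(\tfrac12-k)\big)s^{1/2-n}$ has sign $(-1)^{n+1}$ for $s>0$, positive for odd $n$ and negative for even $n$, each view term is $n$-increasing for odd $n$ and $n$-decreasing for even $n$; monotonicity ($n=1$) and submodularity ($n=2$) are exactly the first two cases.

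To finish, I would use linearity of the discrete-derivative operators to aggregate. The coefficients $w_f A_f$ are nonnegative, so summing preserves the sign pattern, and the modular path reward contributes $\pathweight(c)\ge 0$ at first order while vanishing at all higher orders, hence it reinforces $1$-monotonicity without disturbing the alternation. Normalization follows since the empty set yields an empty sum inside each square root, giving $\objective(\emptyset)=0$. I expect the main obstacle to be the rigor around the integral representation when the base set has $\pixelterm_{f,t}(A)=0$: the derivatives $\phi^{(n)}(s)\sim s^{1/2-n}$ are singular at the lower limit. I would argue that the iterated integrals nonetheless converge, since the singularity is integrable after the $n$ successive integrations, and that the conclusion is in any case insensitive to this because $\phi^{(n)}$ keeps a constant sign on all of $(0,\infty)$; alternatively one can replace $\phi(s)$ by $\sqrt{s+\epsilon}$, run the argument on the interior, and let $\epsilon\to 0^+$. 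Setting up the mixed-difference identity cleanly by induction and tracking the $(-1)^{n+1}$ sign bookkeeping is the part demanding the most care.
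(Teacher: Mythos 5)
Your proposal is correct and follows essentially the same route as the paper's proof: a nonnegative modular pixel term composed with the square root, an inductive integral representation of the $n$-th discrete derivative (the paper's Lemma~\ref{lemma:real_composition}) whose sign is inherited from the alternating signs of $\phi^{(n)}$, aggregation under nonnegative linear combination, and a modular path term that contributes only at first order and vanishes for $n\geq 2$. The only substantive difference is your explicit treatment of the singularity of $\phi^{(n)}$ at $s=0$ (via integrability of the iterated integrals or the $\sqrt{s+\epsilon}$ regularization), a technical point the paper's proof passes over silently; this is a refinement of the same argument rather than a different approach.
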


\begin{corollary}[Bounded suboptimality]
  \label{corollary:bounded_suboptimality}
  Theorem~\ref{theorem:submodular_objective} ensures that $\objective$
  satisfies the requirements stated in Sec.~\ref{sec:sequential}
  for sequential (and multi-round) planning
  to produce solutions that satisfy
  $\objective(X^\greedy) \geq \frac{1}{2} \objective(X^\greedy)$.
  Additionally, $\objective$ is 3-increasing
  which is sufficient to guarantee bounded suboptimality for
  distributed optimization methods~\citep{corah2020phd,xu2022cdc} such as via
  the RSP algorithm (in expectation)~\citep[Theorem~9]{corah2020phd}.
\end{corollary}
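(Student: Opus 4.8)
The plan is to derive this corollary directly from Theorem~\ref{theorem:submodular_objective}, treating it as bookkeeping that feeds the already-established properties of $\objective$ into the optimization guarantees recalled in Sections~\ref{sec:background_submodular_maximization} and~\ref{sec:sequential}. I would split the argument into the two claims it makes: the $\tfrac12$ guarantee for (multi-round) sequential greedy planning, and the $3$-increasing property needed for the distributed setting.

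For the first claim, I would recall that the feasible set $\independence$ forms a simple partition matroid (Sec.~\ref{sec:background_submodular_maximization}) and that \eqref{eq:sequential_greedy} is exactly sequential greedy maximization of $\objective$ over that matroid. Theorem~\ref{theorem:submodular_objective} supplies the three hypotheses---that $\objective$ is normalized, monotone, and submodular---so I would invoke the guarantee of \citet{fisher1978} stated in Sec.~\ref{sec:sequential} to conclude $\objective(X^\greedy) \ge \tfrac12 \objective(X^\opt)$ (I would also correct the right-hand side of the stated bound, which should read $\objective(X^\opt)$ rather than $\objective(X^\greedy)$). To extend this to the multi-round variant, I would use the observation from Sec.~\ref{sec:multi-round-greedy} that the procedure in \eqref{eq:multi_round_greedy} begins from $X^\greedy$ and never decreases the objective, so $\objective(X^\multiround) \ge \objective(X^\greedy) \ge \tfrac12 \objective(X^\opt)$; the bound is therefore inherited without any further submodular analysis.

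For the second claim, I would specialize the alternating-monotonicity conclusion of Theorem~\ref{theorem:submodular_objective}---that $\objective$ is $n$-increasing for odd $n$---to the case $n=3$, yielding that $\objective$ is $3$-increasing. Combined with monotonicity and submodularity (also from the theorem), this matches the hypotheses of the distributed guarantee of \citet[Theorem~9]{corah2020phd} for the RSP algorithm, which I would cite to obtain bounded suboptimality in expectation; the same order condition underlies the related analysis of \citet{xu2022cdc}.

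The corollary is routine once the theorem is in hand, so the only obstacles are hypothesis-matching rather than new analysis: confirming that the matroid-constrained greedy setup of \eqref{eq:sequential_greedy} literally fits the framework of \citet{fisher1978}, that the multi-round improvement argument is monotone so the single-pass bound carries over, and that the order condition demanded by the cited distributed result is exactly $3$-increasing. All of the genuine difficulty---establishing submodularity and the higher-order alternating-difference structure of the square-root-composed-modular view terms---resides in Theorem~\ref{theorem:submodular_objective} and not in this corollary.
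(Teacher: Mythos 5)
Your proposal is correct and follows essentially the same route the paper takes implicitly: the corollary is pure hypothesis-matching, feeding the normalized/monotone/submodular conclusions of Theorem~\ref{theorem:submodular_objective} into the guarantee of \citet{fisher1978} over the partition matroid, inheriting the bound for the multi-round variant because that procedure never decreases the objective, and reading off the 3-increasing property as the odd $n=3$ case of the alternating-monotonicity conclusion. You also correctly spotted the typo in the stated bound, which should read $\objective(X^\greedy) \geq \frac{1}{2}\objective(X^\opt)$.
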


We include a full proof of Theorem~\ref{theorem:submodular_objective}
in
\ifextended
Appendix~\ref{appendix:proofs}.
\else
the appendix of the extended version~\citep{hughes2024extended}.
\fi
The following lemma
(which we prove in
\ifextended
Appendix~\ref{sec:composing_modular_real}%
\else
\citep[Appendix~I-C]{hughes2024extended}%
\fi)
is key to our approach.
We use this lemma to prove that $\viewreward$ \eqref{eq:view_quality}
transforms $\pixelterm$ \eqref{eq:pixelterm} such that the resulting function is
monotonic and submodular.
The reader may refer to
\ifextended
Appendix~\ref{sec:monotonicity}
\else
\citep[Appendix~I-A]{hughes2024extended}
\fi
or \citep{foldes2005} for discussion of general monotonicity conditions.

\begin{restatable}[Monotonicity for composing a real and a modular function]
  {lemma}{lemmarealcomposition}
  \label{lemma:real_composition}
%This lemma will describe how composition of a real function and set function with common monotonicity condition preserves that monotonicity condition.
%\micah{There may be references for submodularity and monotonicity but maybe not
%for higher order analysis}
Consider a monotonic, modular set function $\setfun : \ground \rightarrow
\real_{\geq0}$
and a real function $\realfun : \real_{\geq0} \rightarrow \real$.
If $\realfun$ is $m$-increasing (or decreasing)
according to
\ifextended
Def.~\ref{definition:real_monotonic},
\else
\citep[Def.~3]{hughes2024extended},
\fi
then their composition $\hat\setfun(x) = \realfun(\setfun(x))$
is also $m$-increasing (or decreasing).
\end{restatable}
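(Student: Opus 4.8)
The plan is to reduce the higher-order discrete derivatives of the composition $\hat\setfun$ to ordinary (iterated) finite differences of the real function $\realfun$, at which point the hypothesis on $\realfun$ transfers directly. Since $\setfun$ is modular and monotonic with $\setfun(\emptyset)=0$, it is additive and nonnegative: writing $a_x = \setfun(\{x\}) \ge 0$ we have $\setfun(X)=\sum_{x\in X} a_x$, and adjoining an element $c\notin A$ raises the argument by exactly $a_c\ge 0$, i.e. $\setfun(A\cup\{c\})=\setfun(A)+a_c$. For a real function I write the forward difference with step $h\ge 0$ as $\Delta_h\realfun(t)=\realfun(t+h)-\realfun(t)$; by the referenced definition, $\realfun$ being $m$-increasing (decreasing) means exactly that every iterated difference $\Delta_{h_1}\cdots\Delta_{h_m}\realfun(t)$ with $h_1,\dots,h_m\ge 0$ is nonnegative (nonpositive).

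The heart of the argument is the identity
\begin{equation}
  \hat\setfun(c_1,\dots,c_m \mid A)
  = \Delta_{a_{c_1}}\cdots\Delta_{a_{c_m}}\realfun\bigl(\setfun(A)\bigr),
  \label{eq:diff_identity}
\end{equation}
for distinct $c_1,\dots,c_m \notin A$, which I would prove by induction on the order $m$. For $m=1$, modularity gives $\hat\setfun(c_1\mid A) = \realfun(\setfun(A)+a_{c_1}) - \realfun(\setfun(A)) = \Delta_{a_{c_1}}\realfun(\setfun(A))$. For the inductive step, I expand the $m$-th discrete derivative as $\hat\setfun(c_1,\dots,c_{m-1}\mid A\cup\{c_m\}) - \hat\setfun(c_1,\dots,c_{m-1}\mid A)$, apply the inductive hypothesis to each term, and use $\setfun(A\cup\{c_m\})=\setfun(A)+a_{c_m}$ together with the commutativity of forward differences to fold the extra step $a_{c_m}$ into the product, recovering \eqref{eq:diff_identity}.

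With \eqref{eq:diff_identity} established, the conclusion is immediate: the steps $a_{c_1},\dots,a_{c_m}$ are all nonnegative, so whenever $\realfun$ is $m$-increasing the right-hand side is nonnegative for every base set $A$ and every choice of distinct elements, which is precisely the statement that $\hat\setfun$ is $m$-increasing; the decreasing case is identical with the inequality reversed.

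I expect the main obstacle to be the bookkeeping in the inductive step---verifying that the difference operators genuinely commute and that the new step is inserted in the correct slot---and, more conceptually, reconciling the continuous definition of $m$-monotonicity with the discrete one across the boundary of \eqref{eq:diff_identity}. The only degenerate case is $a_c=0$ (an element that $\setfun$ does not register), for which $\Delta_0\realfun$ vanishes; this is harmless, since zero satisfies both sign conditions and so breaks neither the increasing nor the decreasing conclusion.
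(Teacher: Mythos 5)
Your key identity is correct, and it is a clean way to organize the argument: modularity gives $\setfun(A\cup\{c\})=\setfun(A)+a_c$ with $a_c\geq 0$, the base case is immediate, and forward-difference operators with fixed steps commute, so the $m$-th discrete derivative of $\hat\setfun$ is indeed an $m$-fold iterated forward difference of $\realfun$ evaluated at $\setfun(A)$ with nonnegative steps. (A cosmetic point: the paper's higher-order monotonicity for set functions is quantified over disjoint \emph{sets} $Y_1,\ldots,Y_m$, not single elements; but modularity gives $\setfun(A\cup Y_i)=\setfun(A)+\setfun(Y_i)$, so your identity holds verbatim with steps $\setfun(Y_i)$ in place of $a_{c_i}$.)

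The genuine gap is at the transfer step. You assert that, ``by the referenced definition,'' $\realfun$ being $m$-increasing means exactly that every iterated difference $\Delta_{h_1}\cdots\Delta_{h_m}\realfun(t)$ with $h_i\geq 0$ is nonnegative. That is not the paper's definition: Def.~\ref{definition:real_monotonic} defines $m$-increasing by the pointwise sign of the $m$-th \emph{derivative}, $\realfun^{(m)}(x)\geq 0$. The implication ``$\realfun^{(m)}\geq 0$ implies all $m$-th order forward differences with nonnegative steps are nonnegative'' is true, but it is precisely the analytic content that has to be proved, and it is exactly what the paper's proof supplies: an induction, using the fundamental theorem of calculus at each stage, showing that the set derivative equals the iterated integral $\int_0^{\setfun(Y_1)}\cdots\int_0^{\setfun(Y_m)}\realfun^{(m)}\bigl(\setfun(X)+\sum_{i=1}^m s_i\bigr)\,\mathrm{d}s_m\cdots\mathrm{d}s_1$, whose sign is then manifest from the sign of $\realfun^{(m)}$ and the nonnegativity of the integration limits. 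You explicitly flag ``reconciling the continuous definition of $m$-monotonicity with the discrete one'' as an anticipated obstacle, but the proof as written never resolves it, so it is incomplete at its load-bearing step. To close it, prove (by the same induction you sketch, with the fundamental theorem of calculus) that $\Delta_{h_1}\cdots\Delta_{h_m}\realfun(t)=\int_0^{h_1}\cdots\int_0^{h_m}\realfun^{(m)}(t+s_1+\cdots+s_m)\,\mathrm{d}s_m\cdots\mathrm{d}s_1$ for $h_i\geq 0$; combined with your identity this recovers the paper's argument, with your reduction nicely separating the set-theoretic bookkeeping (which modularity makes trivial) from the calculus (which is where the real work lies).
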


The rest of the proof of Theorem~\ref{theorem:submodular_objective} is in
\ifextended
Appendix~\ref{sec:theorem_proof}.
\else
\citep[Appendix~I-D]{hughes2024extended}.
\fi
The outline of this proof is as follows.
First, we apply Lemma~\ref{lemma:real_composition} to $\viewreward$ and
$\pixelterm$.
Then, we apply results by \citet{foldes2005} to prove that the sum of terms in
$\objective$ \eqref{eq:objective} preserves monotonicity conditions.
Finally, $\objective$ is normalized ($\objective(\emptyset) \!=\! 0$) because
all terms are normalized.

% Future works investigating autonomous videography
% might also construct objectives like SRPPA based on
% transformations other than the square root.

\begin{remark}[Transformations with other real functions]
  Following Lemma~\ref{lemma:real_composition},
  real functions other than the square root could be applied to implement
  diminishing returns.
  The functions $1-e^{-x}$ and $\log(x+1)$ are two other examples with
  alternating derivatives.
  Other functions such as variations of sigmoids $\frac{1}{1+e^{-x}}$
  can produce monotonic submodular objectives that do not satisfy higher order
  monotonicity properties.
\end{remark}

Although $\objective$ may not appear to belong to any
specific class of functions aside from satisfying these monotonicity
conditions, we can identify some additional structure.

\begin{remark}[Relationship between coverage and alternating derivatives]
  In addition to being monotonic and submodular, $\objective$ satisfies alternating
  monotonicity conditions because derivatives of the square root have
  alternating signs (Theorem~\ref{theorem:submodular_objective}).
  We have remarked before that this same alternating derivatives property
  applies to a form of weighted coverage
  objective~\citep[Theorem~9]{corah2020phd}, and others have made similar
  observations~\citep{salek2010you,wang2015}.
  One may suspect that alternating derivatives are
  sufficient for a set function to be equivalent to weighted coverage though we
  are not yet aware of a published proof of this statement.
% Actually, the Levin paper eventually cut this.
% \micah{
%   \citet[Section 6.3]{bach2013} sort of discusses the relationship between
%   alternating derivatives and (measure) coverage functions.
%   This is basically a fancy way of writing about weighted coverage.
%   \citet{gupta2020recourse} provides a more intuitive discussion of Bach's
%   result.
%   However, this relationship to coverage is mostly tangential to this paper.
% }
\end{remark}

\section{Methods}
This section introduces baseline planners we compare against and their
implementations, evaluation scenarios, and approaches to evaluating perception
performance.

% \begin{table}[h]
% \caption{An Example of a Table}
% \label{table_example}
% \begin{center}
% \begin{tabular}{|c||c|}
% \hline
% One & Two\\
% \hline
% Three & Four\\
% \hline
% \end{tabular}
% \end{center}
% \end{table}

% Single figure example
%   \begin{figure}[thpb]
%       \centering
%       \framebox{\parbox{3in}{We suggest that you use a text box to insert a graphic (which is ideally a 300 dpi TIFF or EPS file, with all fonts embedded) because, in an document, this method is somewhat more stable than directly inserting a picture.
% }}
%       %\includegraphics[scale=1.0]{figurefile}
%       \caption{Inductance of oscillation winding on amorphous
%       magnetic core versus DC bias magnetic field}
%       \label{figurelabel}
%   \end{figure}

\subsection{Planner baselines}

\subsubsection{Myopic planner}

Myopic planning refers to planning without coordination with the
rest of the team.
Specifically, robots run the single-robot planner (Sec.~\ref{sec:single-robot})
for themselves only without sharing results like in the sequential planning
scheme.
In~\citep{corah_performance_2022}, we observe that robots seeking to maximize view
quality without coordination may converge to the same views to the detriment
of global view rewards.

\subsubsection{Formation planner}

Formation planning is
a simple and effective approach for filming
groups of moving
actors~\citep{Ho_Jong_Freeman_Rao_Bonatti_Scherer_2021,saini2019markerless}.
%This approach could be made more flexible by allowing actors to move along the
%formation (\skyler{CITE HERE}), allowing the formation to change in shape or
%size (\skyler{CITE}), or temporarily break from formation, perhaps to avoid
%obstacles.
%Several such planners have been implemented in real world systems and therefore
%are an important comparison in this work.
% \skyler{Maybe consider moving some of this to the introduction}
% \skyler{Note, our formation planner works by centering on the centroid of the actors, and it sets its radius to be the max distance of any actor from the centroid + 5}
% \micah{This is a lot of text: focus on our implementation and briefly cite sources}
For our baseline, we implemented a formation planner
that places robots evenly on a circle centered on the centroid of the actor
positions with a radius based on the distance of the furthest actor from
the center plus a safety margin.
In essence, we assume the group of actors behaves like a single
``meta-actor.''
To improve performance, each robot also focuses its view on the nearest actor.
This approach would be effective in many real world scenarios but
struggles as groups move apart or become less circular.
% \micah{rephrase prior sentence}

\subsubsection{Assignment planner}

% \skyler{Need to clean this up a little bit. Reduce the length of this section.}
With the assignment planner, each robot is tasked with observing only a subset
of the actors.
The assignment planner attempts to evenly distribute actors to robots and
produces an assignment that is \textit{fixed} for the entire planning horizon.
If the number of actors is larger than the number of robots, the set of actors
will be distributed equally among the robots, but some actors will not be
assigned if the set is not evenly divisible.
If the number of robots is larger than the number of actors, each robot
will be assigned a single actor with some overlapping assignments.
%This simple scheme ensures that in most cases, actors and robots are optimally
%paired.

%Such static assignments can be useful in achieving fine grained control
%over how sensing quality is distributed across the actors, but its performance
%in the general case is closely tied to the choice of assignment function.

% The following assignment function $M(a, n_t, n_a)$ produces sets of actors based on the
% robot id $a_i$, the number of actors, and robots respectively.

    % \begin{equation}
    % M(a_i, n_t, n_a) = \left\{
    %     \begin{array}{ll}
    %         m(a \bmod n_t, n_t, n_a)
    %     \end{array}
    % \right\}.
    % \end{equation}

    % Where $m$ is defined as:
    % \begin{equation}
    %     \{\forall t_i \in T_i \mid t_i >= \text{offset } \text{and } t_i <= \text{width } + \text{offset}\}
    % \end{equation}
    % with $\text{offset}$ and $\text{width}$ being defined constants.

%Our implementation uses an assignment function that produces an even mapping of
%robots to actors.

\subsection{Scenarios}

We evaluate our planners and baselines on a set of hand-crafted scenarios
(Fig.~\ref{fig:scenario_summary})
that mimic challenging multi-robot filming situations.
These scenarios primarily specify the actor trajectories and the number of
robots.
Two scenarios were designed to act as canonical points of
comparison, such as stationary targets in a tight cluster
(Fig.~\ref{subfig:cluster}) or a group uniformly separating
(Fig.~\ref{subfig:spreadout_group}), and
we expect similar performance across all planners for the \emph{cluster}
scenario.
Some scenarios are designed to target specific weaknesses of our baselines.
In the \textit{cross-mix} scenario, 6 actors start in pairs, then cross and mix
together such that two of the three groups have swapped members; this can be challenging for planners that rely on fixed robot-actor pairings.
The \textit{track-runners} and \textit{priority-runners} scenarios mimic real
life situations relevant to sports cinematography where groups periodically
spread and join or where one character, i.e. the leader in a race, may be a more
important subject for filming than others.
The \textit{priority-speaker} scenario mimics a gathering in which a
group of moving actors is addressed by a stationary speaker.
In each ``priority'' scenario, one actor is given a higher weight than the
others.
%Broadly speaking,
A desirable outcome is to focus more attention on
prioritized actors while obtaining fewer or more distant views of the rest.
Additionally, all
scenarios with significant group splitting (\textit{four-split},
\textit{split-and-join}, and \textit{spreadout-group}) pose a challenge for
applying formation planning~\citep{Ho_Jong_Freeman_Rao_Bonatti_Scherer_2021} to
multi-robot settings.

% based on this: https://tex.stackexchange.com/questions/122786/align-16-small-images-in-a-4-x-4-grid
% Needs fixing for the subcaptions
\def\imgwidth{1.5in}
\begin{figure*}
\centering
\setlength{\tabcolsep}{1pt}
\setlength{\figureheight}{1.2in}
\begin{tabular}{cccc}
  \begin{subfigure}[b]{0.24\linewidth}
    \centering
    %\frame{
      \includegraphics[height=\figureheight, trim={100 100 100 100}, clip]{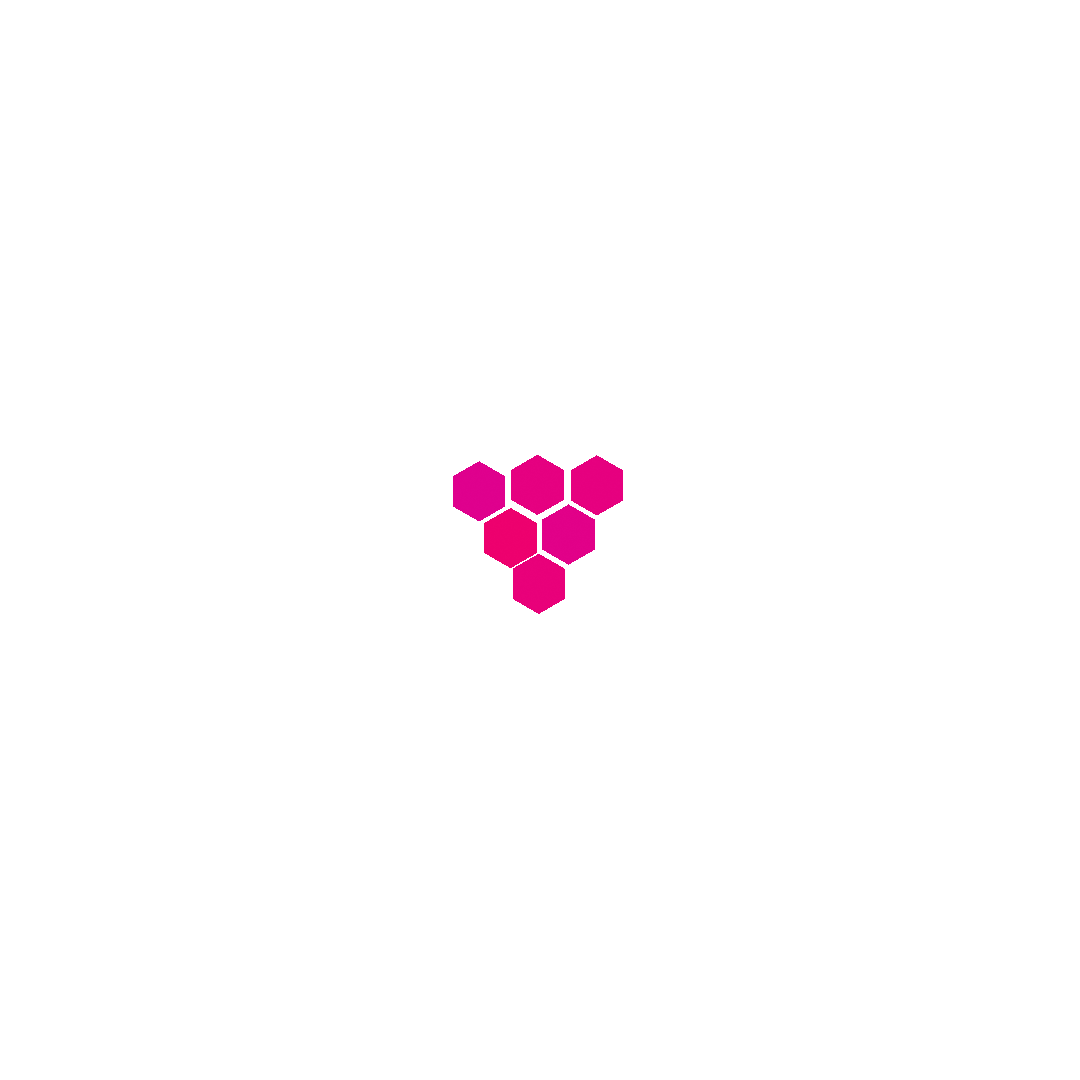}
    %}
    \caption{Cluster; $\numrobots\!=\!4$, $\numactors\!=\!6$}
    \label{subfig:cluster}
  \end{subfigure} &
  \begin{subfigure}[b]{0.24\linewidth}
    \centering
    %\frame{
      \includegraphics[height=\figureheight, trim={50 150 50 50}, clip]{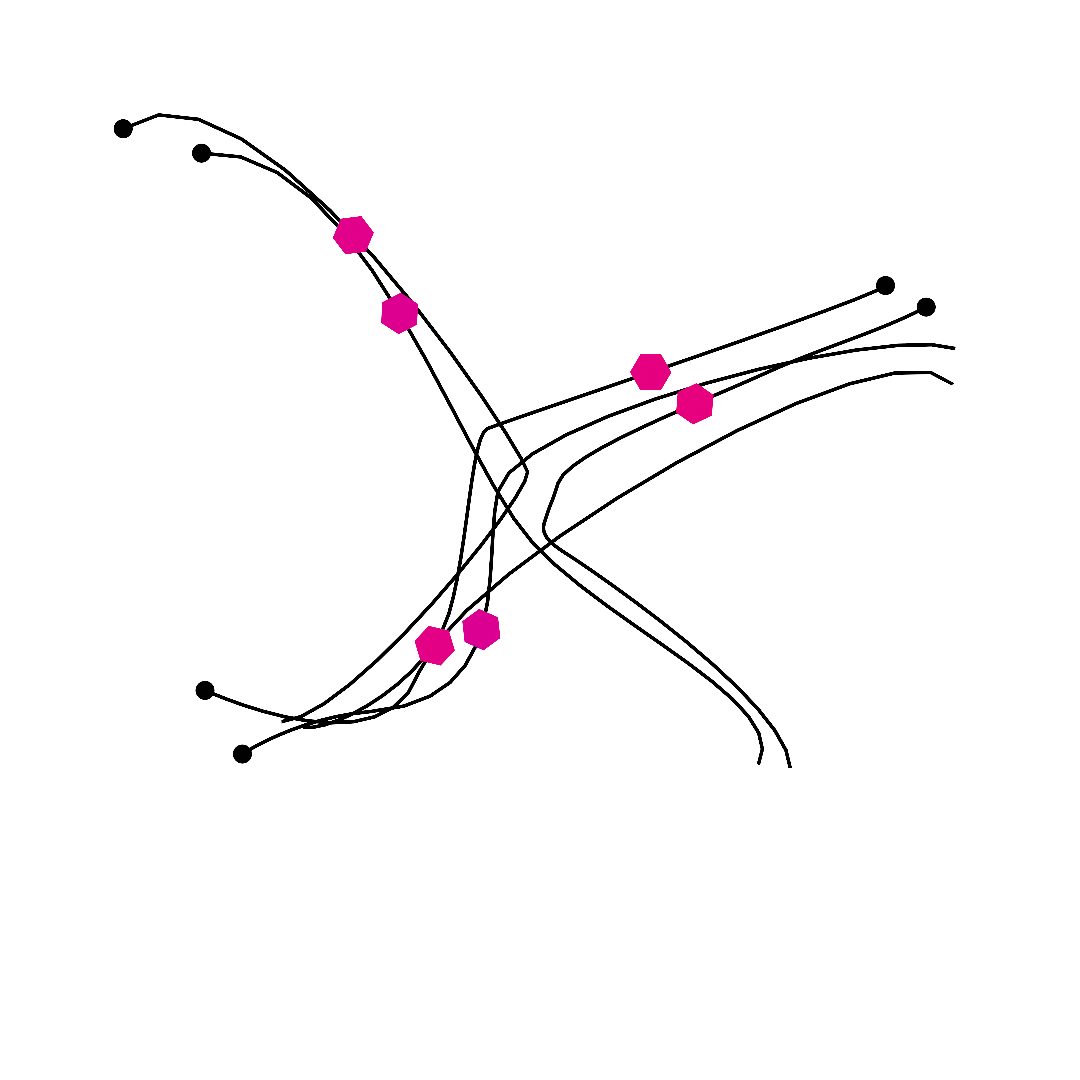}
    %}
    \caption{Cross-Mix; $\numrobots\!=\!4$, $\numactors\!=\!6$}
    \label{subfig:cross_mix}
  \end{subfigure} &
  \begin{subfigure}[b]{0.24\linewidth}
    \centering
    %\frame{
      \includegraphics[height=\figureheight, trim={100 140 100 80}, clip]{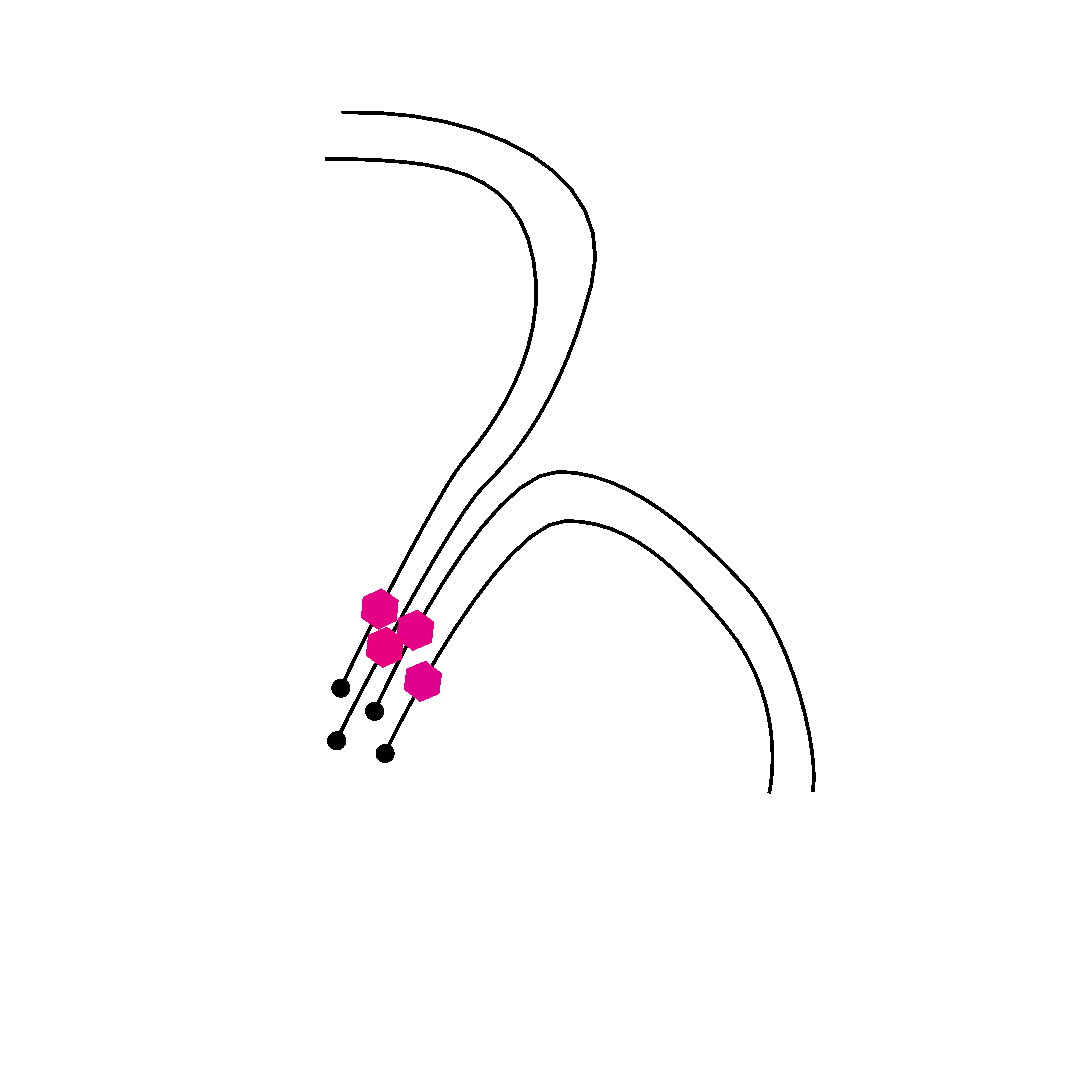}
    %}
    \caption{Four-Split; $\numrobots\!=\!4$, $\numactors\!=\!4$}
    \label{subfig:four_split}
  \end{subfigure} &
  \begin{subfigure}[b]{0.24\linewidth}
    \centering
    %\frame{
      \includegraphics[height=\figureheight, trim={100 100 100 30}, clip]{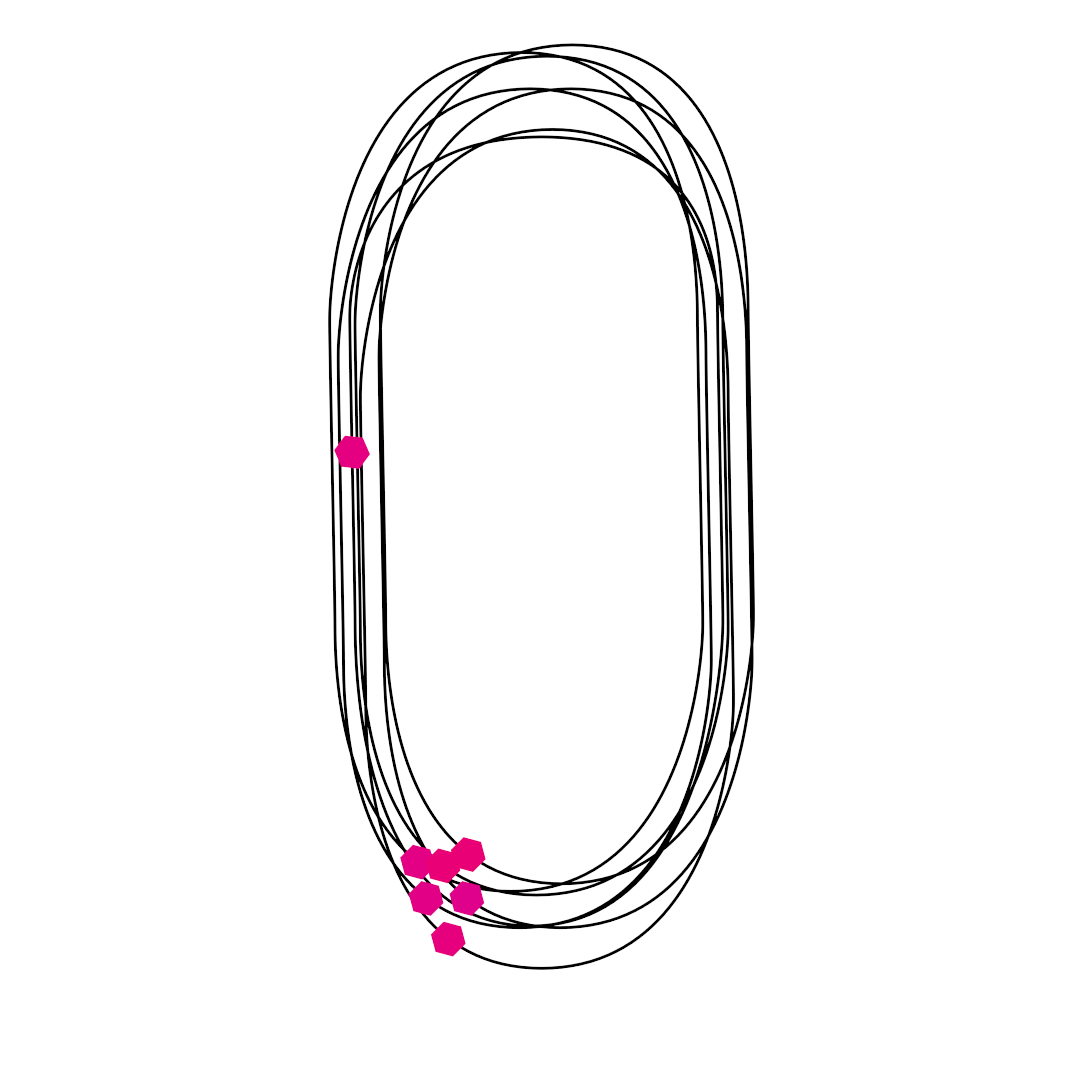}
    %}
    \caption{Priority-Runners; $\numrobots \!\!=\! 5$, $\numactors \!\!=\! 7$}
    \label{subfig:priority_runners}
  \end{subfigure} \\
  \begin{subfigure}[b]{0.24\linewidth}
    \centering
    %\frame{
      \includegraphics[height=\figureheight, trim={100 200 250 200}, clip]{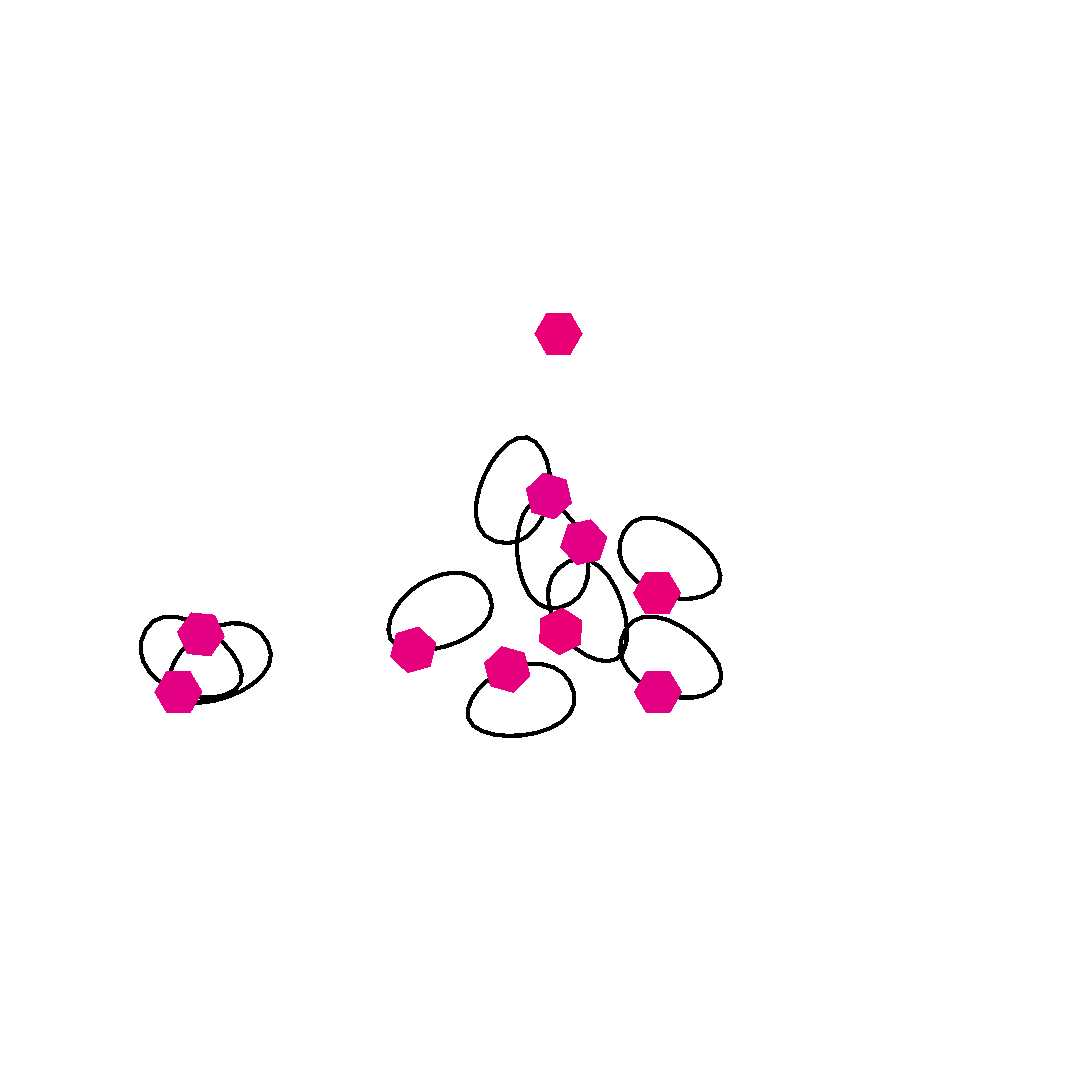}
    %}
    \caption{Priority-Speaker;\,$\numrobots\!\!=\!\!5$,\,$\numactors\!\!=\!10$}
    \label{subfig:priority_speaker}
  \end{subfigure} &
  \begin{subfigure}[b]{0.24\linewidth}
    \centering
    %\frame{
      \includegraphics[height=\figureheight, trim={90 120 90 120}, clip]{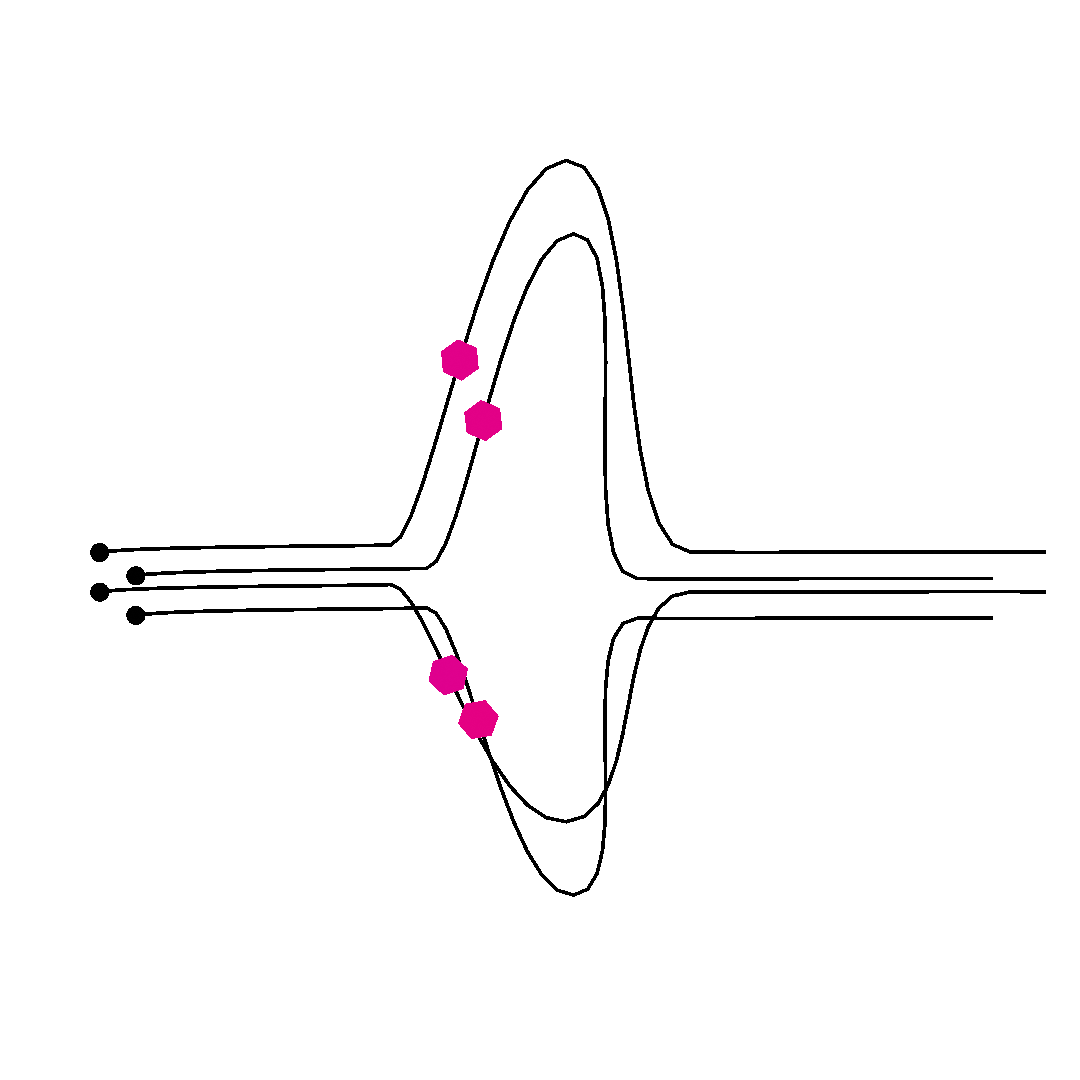}
    %}
    \caption{Split-and-Join; $\numrobots\!=\!4$, $\numactors\!=\!4$}
    \label{subfig:split_and_join}
  \end{subfigure} &
  \begin{subfigure}[b]{0.24\linewidth}
    \centering
    %\frame{
      \includegraphics[height=\figureheight, trim={100 130 100 130}, clip]{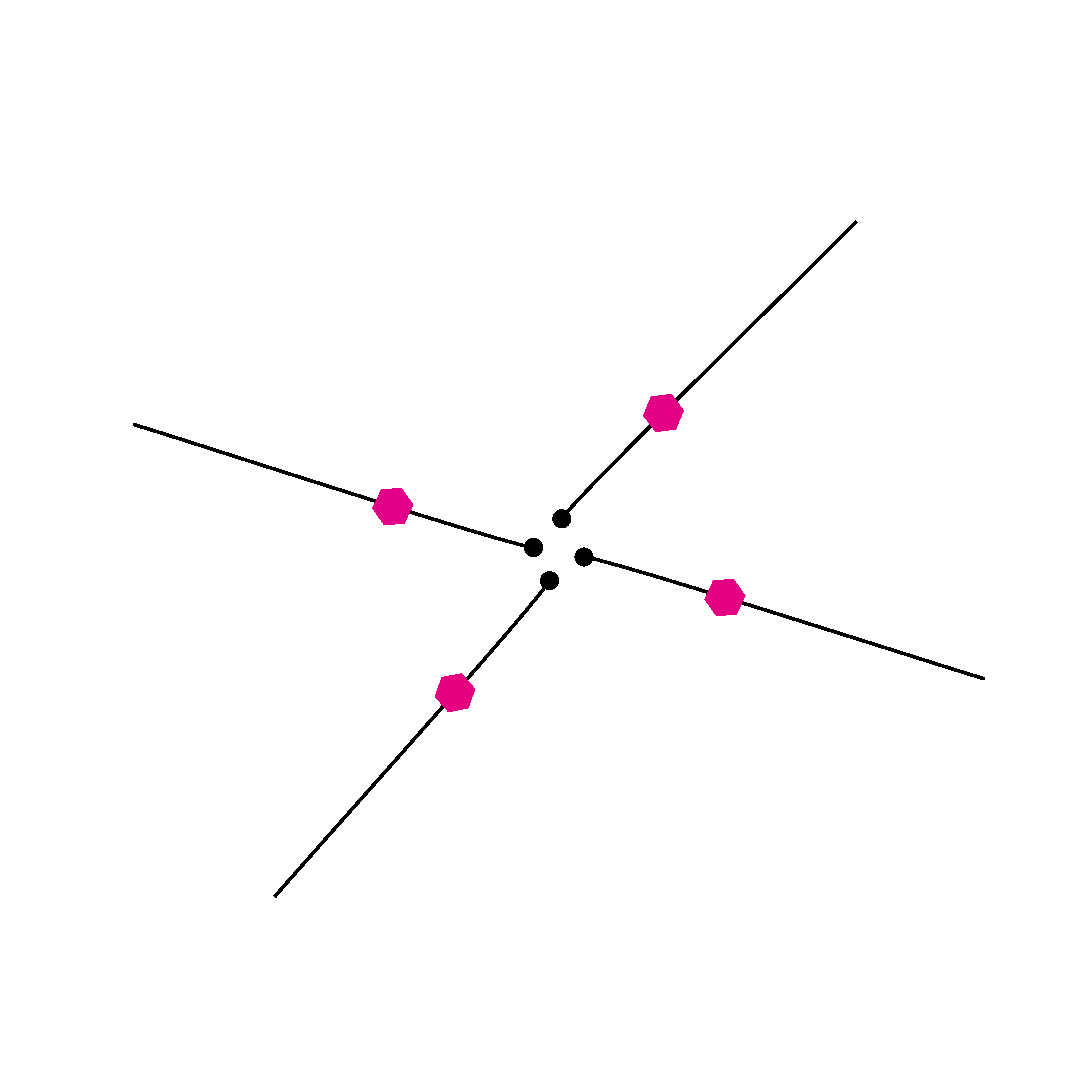}
    %}
    \caption{Spreadout-Group;\,$\numrobots\!\!=\!4$,\,$\numactors\!\!=\!4$}
    \label{subfig:spreadout_group}
  \end{subfigure} &
  \begin{subfigure}[b]{0.24\linewidth}
    \centering
    %\frame{
      \includegraphics[height=\figureheight, trim={60 40 60 40}, clip]{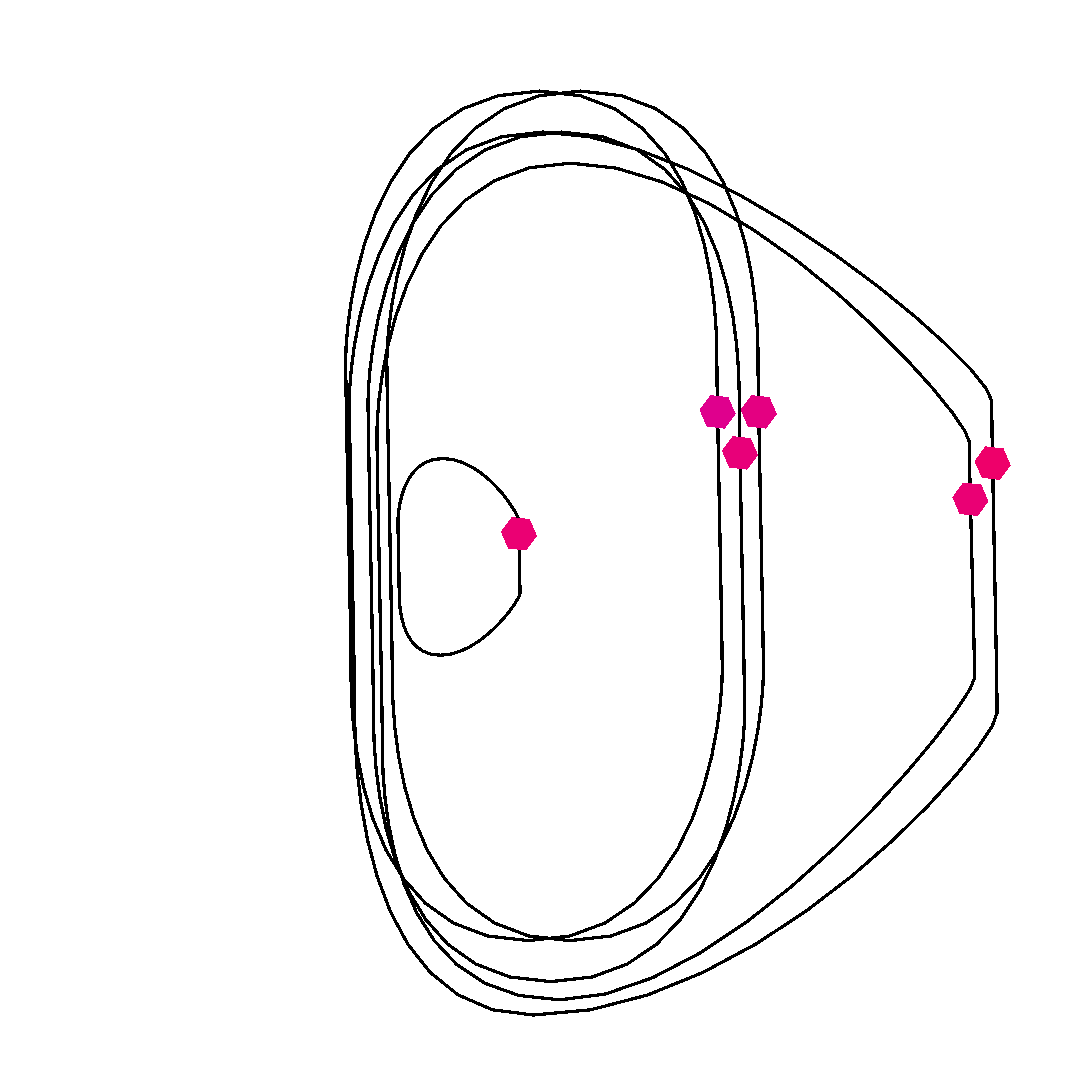}
    %}
    \caption{Track-Runners; $\numrobots\!=\!5$, $\numactors\!=\!6$}
    \label{subfig:track_runners}
  \end{subfigure}
\end{tabular}
\caption{
  Summary of experiment scenarios: Each scenario is shown from a top down view
  with actors as magenta hexagons. The actor paths are demarcated by black
  lines, and dots mark the
  starting positions. $\numrobots$ and $\numactors$ refer to the number of robots, and the number of actors respectively.
  All actors (and faces) have the same priority $w_f\!=\!1$
  (see \eqref{eq:view_quality})
  except in
  (\subref{subfig:priority_runners})
  the lead runner has $w_f\!=\!10$ and in
  (\subref{subfig:priority_speaker})
  the stationary actor (the speaker) has $w_f\!=\!5$.
}
\label{fig:scenario_summary}
\end{figure*}

% \begin{figure}
%     \centering
%     \input{figures/split_and_join_ppa.tex}
%     \caption{Caption}
%     \label{fig:enter-label}
% \end{figure}

% \begin{table}
%     \centering
%     \begin{tabularx}{0.75\linewidth}{Xcc}
%       \toprule
%       Name & Num. Robot & Num. Actor \\
%       \midrule
%       \emph{cluster} & 4 & 6\\
%       \emph{cross-mix} & 4 & 6\\
%       \emph{four-split} & 4 & 4\\
%       \emph{priority-runners} & 5 & 7\\
%       \emph{priority-speaker} & 5 & 10\\
%       \emph{split-and-join} & 4 & 4 \\
%       \emph{spreadout-group} & 4 & 4 \\
%       \emph{track-runners} & 5 & 6 \\
%       \bottomrule
%     \end{tabularx}
%     \caption{Scenario Descriptions \micah{Move details to
%     Fig.~\ref{fig:scenario_summary}}}
% \end{table}

\subsubsection{Common parameters and experimental setup}
Several aspects of each scenario are consistent across evaluations with
each planner.
The robots' initial positions are chosen randomly but are shared across
each planning approach.\footnote{%
  The formation planner is an exception as we specify the robot positions
  directly based on the desired formation and ignore the motion model.
}
The number of robots, the actor trajectories,
the time horizon, and grid resolution are also specified.

\subsection{Evaluation and comparison of view rewards}
\label{sec:evaluation}
\subsubsection{Planner reward (SRPPA) evaluation}

First we evaluate planners with respect to the approximated SRPPA objective
($\objective$, Sec.~\ref{sec:objective}).
By evaluating planner performance against the objective directly, we can
ascertain how well our planners perform compared to the baselines
solving the optimization problem we study \eqref{eq:submodular_maximization}.
However, this approximation does not account for challenges such as occlusions.
%We are interested in observing how well maximizing our SRPPA objective achieves
%the intuition laid out in the start of Sec.~\ref{sec:objective} both
%qualitatively and quantitatively.
%To determine this, we are interested in the overall performance of each planner
%in each scenario across  all time steps, as well as behavior throughout the
%scenario evolution.
%\skyler{TODO: Maybe reference time series figure here as well as Fig 4?}.

\subsubsection{Rendering (Image) evaluation}
\label{sec:rendering_evaluation}

For real world scenarios, consideration of the effect of occlusion on the quality of an
image is important.
For instance, when filming a sporting event, a filmmaker may wish
to choose camera views that are close to the subject, that maximize the
clarity of a view, or include several different players.
In such situations, overlap and occlusions may compromise the quality of a
view.
Although none of the approaches we consider account for visual occlusion, we
wish to quantify the impact of occlusions and other inaccuracies on our results.

We evaluate each planner with a rendering based approach built on the
3D animation software, Blender~\cite{blender}.
This enables us to directly compute pixel densities over the surfaces of the
actors.
By comparing our approximation of SRPPA to the direct rendering-based
evaluation, we can gauge the impact of occlusions and approximations on the
results.

Specifically, we map planner outputs to 3D camera movements, and we
render views of the scene with Blender. %'s Eeeve rendering engine.
The 3D cameras are animated to match the position and orientation of the robots
and are tilted down by a fixed angle $\declination$ as in Fig.~\ref{fig:scene}.
We obtain an image sequence for each robot's camera and compute the SRPPA by
counting pixels on each (uniquely colored) face.
Examples of camera views can be seen in Fig.~\ref{fig:qualitative}.
The reward for a face with area $A$ which robots observe with $P$ pixels is
$A\sqrt{P/A}$, and we sum reward over all faces.
This modifies our approach for planning \eqref{eq:view_quality}
by dropping weighting for prioritization and view-centering.
%This simple metric can be used to demonstrate the effect of occlusion and
%perspective on the scores of each planner.

\begin{figure*}
\centering
\setlength{\tabcolsep}{1pt}
\setlength{\figureheight}{1.2in}
      \resizebox{\linewidth}{!}{
      \input{figures/qualtiative}
      }
      \caption{Trajectories from \emph{Multi-Round Greedy} on
        the \emph{cross-mix} scenario:
        Each uniquely colored circle represents an actor, and
        pairs initially have similar colors.
        At the point of crossing, two out of the three groups swap
        partners. Our coordination scheme naturally handles the complex actor
        movement and produces good view diversity. Shown below are the four
        camera views used in the rendering based evaluation.
        Each face of each actor has a unique color.
      }%
      \label{fig:qualitative}
\end{figure*}

\begin{figure*}
  \setlength{\subfiguresheight}{1.75in}
  \begin{subfigure}[b]{0.25\linewidth}
    \setlength{\figurewidth}{1.1\linewidth}
    \setlength{\figureheight}{\subfiguresheight}
    \centering
    \scriptsize
    \input{figures/time_series/cross_mix.tex}
    \vspace{-1.2em}
    \caption{Cross-Mix}
    \label{subfig:evaluation_cross_mix}
  \end{subfigure}
  \begin{subfigure}[b]{0.25\linewidth}
    \setlength{\figurewidth}{1.1\linewidth}
    \setlength{\figureheight}{\subfiguresheight}
    \centering
    \scriptsize
    \input{figures/time_series/track_runners.tex}
    \caption{Track-Runners}
    \label{subfig:evaluation_track_runners}
  \end{subfigure}
  \begin{subfigure}[b]{0.5\linewidth}
    \setlength{\figurewidth}{0.55\linewidth}
    \setlength{\figureheight}{\subfiguresheight}
    \centering
    \scriptsize
    % This file was created by tikzplotlib v0.9.8.
\begin{tikzpicture}

\definecolor{color0}{rgb}{0.12156862745098,0.466666666666667,0.705882352941177}
\definecolor{color1}{rgb}{1,0.498039215686275,0.0549019607843137}
\definecolor{color2}{rgb}{0.172549019607843,0.627450980392157,0.172549019607843}
\definecolor{color3}{rgb}{0.83921568627451,0.152941176470588,0.156862745098039}
\definecolor{color4}{rgb}{0.580392156862745,0.403921568627451,0.741176470588235}

\begin{axis}[
height=\figureheight,
legend cell align={left},
legend style={
  fill opacity=0.8,
  draw opacity=1,
  text opacity=1,
  at={(0.97,0.03)},
  anchor=south east,
  draw=white!80!black
},
tick align=inside,
tick pos=left,
width=\figurewidth,
x grid style={white!69.0196078431373!black},
xlabel={Time-Step},
xmin=-2.95, xmax=83.95,
xtick style={color=black},
y grid style={white!69.0196078431373!black},
ymin=0.587095686573909, ymax=3.35116003580286,
ytick style={color=black}
]
\addplot [semithick, color0]
table {%
1 3.18102287657771
2 3.17798882330325
3 3.17892093986439
4 3.18057615030845
5 3.18203341532421
6 3.18316768119036
7 3.18419117269964
8 3.18497893086666
9 3.18562782980565
10 3.18289548403138
11 3.17918753010833
12 3.1752546092339
13 3.17110874412981
14 3.16674687103723
15 3.16055123597085
16 3.15643463018813
17 3.12750177202486
18 3.08910622665438
19 3.03863993480468
20 2.95610315214572
21 2.87984642453867
22 2.79763322658397
23 2.28984797510919
24 2.08132488390251
25 2.05587509493385
26 2.02926228327589
27 1.99895093749214
28 1.96592748017348
29 1.93052408928617
30 1.78589948622022
31 1.76974886829292
32 1.77168284043978
33 1.77038346275473
34 1.75781683106163
35 1.72906915114529
36 1.66860302660421
37 1.59791260312193
38 1.61348644716671
39 1.60670082065905
40 1.61816192964983
41 1.67171182527775
42 1.68741648615367
43 1.69554141025194
44 1.70278036631299
45 1.70503221842336
46 1.7352147179995
47 1.90633385696104
48 1.94935285727185
49 1.98865716729314
50 2.02457347927066
51 2.05349283672076
52 2.21251825820875
53 2.53446510727235
54 2.79248408424968
55 2.87798022474903
56 2.96902031034629
57 3.02417450952055
58 3.06608973920251
59 3.08792584220809
60 3.09609753139407
61 3.10704380690025
62 3.10979235430676
63 3.11679213801179
64 3.1226618570984
65 3.12836190920519
66 3.13389729392517
67 3.13926230115232
68 3.14444856627984
69 3.14945118935237
70 3.15426154660407
71 3.15887179399557
72 3.16327085505282
73 3.16494424463051
74 3.16416516648325
75 3.16323876719602
76 3.16214083400701
77 3.16090377918637
78 3.15948940618389
79 3.15791141207046
80 3.15613477482894
};
\addplot [semithick, color1]
table {%
1 0.712734975175225
2 1.35006629957796
3 1.69228357349857
4 2.16545564551589
5 2.50259908969362
6 2.64336387264365
7 2.86031980628274
8 3.03921240071115
9 3.21983264788995
10 3.22552074720154
11 3.21569175740728
12 3.22033272811761
13 3.21640879011474
14 3.20327609612935
15 3.20109734190258
16 3.19732236627968
17 3.16598569329223
18 3.11499587037993
19 3.12803099337136
20 3.04602741432363
21 3.03901899646544
22 2.98748133650729
23 2.91947141390466
24 2.85422285301148
25 2.79883327316528
26 2.72298023345018
27 2.66644961000552
28 2.61684404454569
29 2.57779727841146
30 2.54455972076334
31 2.503356657818
32 2.46440023000251
33 2.4331893279139
34 2.41062291590259
35 2.33947946788664
36 2.34226531479012
37 2.35023571306336
38 2.31287980243488
39 2.31156014432292
40 2.29869900986399
41 2.32681213468667
42 2.3679799894655
43 2.40956375631269
44 2.44183639458381
45 2.48409208551833
46 2.51398716531531
47 2.55535759690759
48 2.59718021027342
49 2.64482121925657
50 2.69816861041591
51 2.73285897711692
52 2.79535572287023
53 2.85925016897214
54 2.99105993266021
55 3.03732011764758
56 3.08577556409477
57 3.13278451893591
58 3.11213643056978
59 3.14723413725656
60 3.17708363653067
61 3.17073958361501
62 3.17695863035917
63 3.17388090265586
64 3.19187386736257
65 3.19217689492258
66 3.19415697564675
67 3.19750469204926
68 3.19401136012323
69 3.20074004424473
70 3.19607073915278
71 3.20360222366616
72 3.19415934182929
73 3.20549324095326
74 3.1948004925739
75 3.20556913280062
76 3.20752228826884
77 3.19149998241132
78 3.16363039779225
79 3.11479436934431
80 3.1129520280551
};
\addplot [semithick, color2]
table {%
1 0.712734975175225
2 1.36093571089624
3 1.70737331847494
4 2.09174919297493
5 2.51207204519291
6 2.57556319172051
7 2.73302482584677
8 2.79057960833887
9 2.81265508638663
10 2.85240838558528
11 2.85669554120901
12 2.83926533330027
13 2.82764202384644
14 2.8417690622078
15 2.84766008033023
16 2.76826589800683
17 2.74312791290799
18 2.83626028139398
19 2.71047437081097
20 2.73935380114048
21 2.58010114949336
22 2.48483065817505
23 2.34577031076853
24 2.3243733469602
25 2.19868624852214
26 2.16871846816037
27 2.1751180166657
28 2.1703213521126
29 2.13860568510872
30 2.11486263171258
31 2.14649955560922
32 2.13311672133
33 2.07880760312192
34 2.07065349737364
35 2.03734564281062
36 1.98771832952391
37 2.00234766908041
38 1.96347084371065
39 1.92757406213854
40 1.87276119177594
41 1.79511839579368
42 1.83108614700273
43 1.82209558395467
44 1.88621442505594
45 1.87830232859113
46 1.83264308070284
47 1.8337001408831
48 1.8894041309882
49 1.87098291042596
50 1.83666640421069
51 1.89604985925585
52 1.92691089455682
53 1.89844446148062
54 2.14386484808078
55 2.30444070386759
56 2.15941373811304
57 2.33020963930817
58 2.44005430635887
59 2.48638462182282
60 2.49648226268921
61 2.41249186970097
62 2.44240765615123
63 2.50556533672444
64 2.56215437752701
65 2.43310914409967
66 2.46634774818995
67 2.49896297130341
68 2.56452722976249
69 2.57166402830876
70 2.44426844545488
71 2.47629012482112
72 2.52528038688056
73 2.57667707673453
74 2.54821157079869
75 2.47360151776793
76 2.48615790614017
77 2.57691615329854
78 2.50459971318651
79 2.53129868224907
80 2.53028542483736
};
\addplot [semithick, color3]
table {%
1 0.712734975175225
2 1.34786137001964
3 1.68194371731512
4 2.16545564551589
5 2.50259908969362
6 2.60872760863384
7 2.82092638032573
8 2.88322380538612
9 2.90252742359647
10 2.90036300629924
11 2.89790125861458
12 2.89179698031844
13 2.89612091981158
14 2.92157194259067
15 2.91153254239235
16 2.89204999762669
17 2.73439773271145
18 2.58947270395259
19 2.44505915887308
20 2.3245318481285
21 2.28649652850834
22 2.233510503231
23 2.18940159969856
24 2.13239265954958
25 2.07734707696752
26 2.02907744908536
27 1.97226680840426
28 1.93043505959322
29 1.876680330106
30 1.85168319485164
31 1.80626659021266
32 1.7769094979813
33 1.72117925827274
34 1.69266185997182
35 1.6258298301332
36 1.65002180353391
37 1.69675614810766
38 1.62846595083494
39 1.64266211081517
40 1.64504077521842
41 1.6746618006599
42 1.66614874166395
43 1.72714750554972
44 1.75782522682327
45 1.80914432598124
46 1.84246500578151
47 1.88070816995284
48 1.91724240673863
49 1.95760371149012
50 2.00144439605875
51 2.04642965378292
52 2.09508997467916
53 2.13961613076431
54 2.16342937257263
55 2.24311856354149
56 2.31735340089724
57 2.36129893365392
58 2.30680460183527
59 2.35260802348135
60 2.36858481601547
61 2.35750025179847
62 2.40409735018518
63 2.39722745788966
64 2.41202861999024
65 2.39828221281238
66 2.41584235861548
67 2.40655104345362
68 2.41812727809097
69 2.41365559981696
70 2.41874083449015
71 2.41945539225036
72 2.41753919145301
73 2.42382186380613
74 2.41436572403087
75 2.42663794506833
76 2.41170266601625
77 2.42779479855717
78 2.41851170824063
79 2.42717342548923
80 2.42442649389534
};
\addplot [semithick, color4]
table {%
1 0.712734975175225
2 1.36438069928286
3 1.73998862064878
4 2.36320159945577
5 2.78300253683979
6 2.9701160734125
7 3.14410905903747
8 3.17357584248476
9 3.13916187566711
10 3.13550527202305
11 3.20256913076624
12 3.22033272811761
13 3.21640879011474
14 3.21369701159224
15 3.2060986005345
16 3.21032350444268
17 3.17839825189871
18 3.11810731561803
19 3.12803099337136
20 3.04712456981628
21 3.04046363019158
22 2.98748133650729
23 2.91584844208076
24 2.85414683012487
25 2.79981753037178
26 2.74446940909027
27 2.69144502609808
28 2.63975593378594
29 2.59874788362081
30 2.5649930500644
31 2.52630798983406
32 2.49307415903989
33 2.46573528869251
34 2.44264144015504
35 2.41911905886561
36 2.39253063289763
37 2.40363367784366
38 2.37821739753166
39 2.37540938216569
40 2.38859392543782
41 2.39876510882083
42 2.43870263196983
43 2.47528508498603
44 2.49465961578672
45 2.51998448182345
46 2.56187071007149
47 2.60368805609989
48 2.63835617286431
49 2.68733733357247
50 2.73214027063061
51 2.78599279230443
52 2.84448159696019
53 2.89732220341724
54 2.96110246673078
55 3.01942586360421
56 3.08931224537434
57 3.11813768192437
58 3.139557738566
59 3.15846421126631
60 3.17896700749178
61 3.17100647257002
62 3.17697045262582
63 3.18872948592032
64 3.19119763191544
65 3.18573929197846
66 3.19415697564675
67 3.19750469204926
68 3.19452607450328
69 3.20074004424473
70 3.19876244311175
71 3.20360222366616
72 3.20226859183676
73 3.20549324095326
74 3.20510755055382
75 3.20210081355711
76 3.2044522235353
77 3.19070674438871
78 3.16363039779225
79 3.11479436934431
80 3.1129520280551
};
\end{axis}

\end{tikzpicture}
    % This file was created by tikzplotlib v0.9.8.
\begin{tikzpicture}

\definecolor{color0}{rgb}{0.12156862745098,0.466666666666667,0.705882352941177}
\definecolor{color1}{rgb}{1,0.498039215686275,0.0549019607843137}
\definecolor{color2}{rgb}{0.172549019607843,0.627450980392157,0.172549019607843}
\definecolor{color3}{rgb}{0.83921568627451,0.152941176470588,0.156862745098039}
\definecolor{color4}{rgb}{0.580392156862745,0.403921568627451,0.741176470588235}

\begin{axis}[
height=\figureheight,
legend cell align={left},
legend style={
  fill opacity=0.8,
  draw opacity=1,
  text opacity=1,
  at={(0.03,0.03)},
  anchor=south west,
  draw=white!80!black
},
tick align=inside,
tick pos=left,
width=\figurewidth,
x grid style={white!69.0196078431373!black},
xlabel={Time-Step},
xmin=-2.95, xmax=83.95,
xtick style={color=black},
y grid style={white!69.0196078431373!black},
ylabel={Evaluation (Image)},
ymin=67.9678140073063, ymax=450.761712534841,
ytick style={color=black}
]
\addplot [semithick, color0]
table {%
1 341.75168834256
2 345.289674699098
3 342.277814739666
4 344.637237404044
5 346.736974168982
6 345.543977610794
7 346.509741175853
8 348.516711376978
9 346.530351781856
10 349.082857495729
11 346.954963957021
12 344.977570869251
13 345.014566477783
14 341.153042356968
15 342.120938137284
16 347.64243670595
17 364.67263112658
18 352.275204546014
19 339.366594090913
20 333.223186437124
21 330.641918129088
22 311.221565436269
23 287.436218172838
24 272.584810030909
25 263.017523280376
26 260.903388596341
27 249.525823046126
28 242.737927456389
29 224.170195351276
30 215.558198489044
31 207.999556418801
32 199.739325733278
33 201.9745426917
34 199.850485484298
35 194.181788462242
36 185.238039778975
37 173.092165098348
38 177.711599399017
39 182.160375938857
40 182.241874798346
41 194.950999025806
42 199.964023952767
43 207.36528013713
44 210.43351096557
45 211.487469378797
46 244.462380330466
47 255.596378823803
48 261.824317228854
49 281.422588026677
50 282.535911743782
51 306.942992038178
52 332.960538980494
53 352.294652429334
54 361.946371547979
55 367.458376034948
56 374.564074735718
57 395.449411930549
58 387.023269829375
59 375.157275236013
60 367.75290810901
61 361.50872803326
62 361.971587554655
63 364.378206585604
64 366.223436345845
65 365.317643936776
66 367.494077826511
67 364.439337507977
68 362.756688563733
69 366.032213852175
70 366.654765472489
71 365.632386398579
72 369.006806520741
73 365.404623995287
74 366.852110184569
75 363.822391391234
76 366.100336681659
77 367.164678563074
78 367.686386526412
79 365.513413491985
80 85.3675366676488
};
\addplot [semithick, color1]
table {%
1 131.674122980344
2 141.073615248865
3 213.867934061439
4 261.660720555037
5 297.271122019381
6 283.319901165099
7 321.213857116357
8 363.031169729873
9 357.501217685276
10 370.774760895001
11 351.764170530614
12 365.513976543558
13 355.535120953548
14 356.281519529213
15 355.197442522914
16 349.63129608369
17 405.219403881946
18 404.22523077824
19 426.474361960043
20 391.028099048845
21 393.200716520242
22 395.31855728698
23 383.972542998556
24 376.208801356058
25 359.457063692839
26 357.477115287731
27 350.982755922659
28 356.754393799568
29 349.762310563452
30 346.029965282147
31 334.957228845019
32 329.191831840829
33 332.990308917012
34 292.718452716416
35 293.130771098276
36 284.66392404457
37 273.312410687058
38 278.725915726003
39 234.878938608838
40 264.140928351087
41 259.789580031556
42 261.242955545495
43 276.1850914618
44 275.132107574054
45 295.786799102012
46 295.610492228281
47 301.69820025106
48 285.89803218474
49 311.617341118162
50 306.494221023692
51 316.334234333197
52 366.951410643135
53 400.779463944322
54 399.321974411734
55 406.015743398354
56 399.163460818608
57 376.974715876079
58 402.137408387301
59 395.183187321475
60 392.536385676253
61 377.026407356268
62 373.092811559785
63 385.437879141694
64 397.935219629718
65 385.22353202257
66 387.634850831863
67 375.890035481429
68 382.334067238111
69 369.67294484233
70 382.048419217544
71 367.872212336105
72 380.898630474972
73 367.347051257011
74 377.51192485038
75 378.984159972703
76 375.368593223757
77 378.293554507341
78 371.176000433039
79 368.696425263358
80 85.3675366676488
};
\addplot [semithick, color2]
table {%
1 124.73517173732
2 143.453720855625
3 208.597787786443
4 271.799178419682
5 302.486182055223
6 299.117786087455
7 318.665787315287
8 326.670769822308
9 336.443498782199
10 326.979887214113
11 328.253696113286
12 332.138602038304
13 324.955400529569
14 330.154054384777
15 323.440489334434
16 323.23315048084
17 374.310689828772
18 376.746957426708
19 397.595952757453
20 403.8098981721
21 389.874398190056
22 372.399415974531
23 357.790234694256
24 334.071616640598
25 325.654372662435
26 325.185927213403
27 328.445303807172
28 313.118429606973
29 314.336419542113
30 314.841067743294
31 311.030171979141
32 303.246678154531
33 306.362040555705
34 306.845066335015
35 287.156452971816
36 278.258895778148
37 259.789300779483
38 258.977667344047
39 264.015616719248
40 238.387563649466
41 234.652167063684
42 231.312866200501
43 235.18048849431
44 242.500917414615
45 233.187391614711
46 228.53514722615
47 247.306200448609
48 245.550405637694
49 240.98118255887
50 223.896880374057
51 242.19616636469
52 249.524736951709
53 285.937766953304
54 286.06041742405
55 232.854324470619
56 246.082076835998
57 253.00340564819
58 275.992591243733
59 282.205424077383
60 264.957695546655
61 263.100867940412
62 273.49136100595
63 290.921829277006
64 262.367829128173
65 258.517489894962
66 273.059248548279
67 286.933303776044
68 302.45965257368
69 251.461533407631
70 262.940924789587
71 283.886954162014
72 295.512916099623
73 292.740301103542
74 256.580444806163
75 268.641903847401
76 281.769988464386
77 254.838463500996
78 265.818987902931
79 277.583535562771
80 85.3675366676488
};
\addplot [semithick, color3]
table {%
1 131.676830701985
2 136.537356395392
3 213.867934061439
4 261.660720555037
5 288.484431053737
6 316.267204197506
7 330.371788647584
8 325.794224447679
9 329.700331029661
10 339.838068523662
11 324.698531315259
12 338.384224561011
13 343.097367238954
14 333.77577384709
15 334.420053687662
16 325.81852998313
17 324.491599946468
18 305.211489497982
19 297.660209727428
20 305.261370806761
21 313.863949990188
22 312.413568345163
23 301.363906670344
24 297.936349742059
25 290.623296799238
26 282.450222487454
27 280.277108690318
28 272.21371879036
29 269.337872867291
30 264.754457738489
31 257.490235825448
32 254.704075292587
33 251.086276166266
34 174.883760322716
35 175.470394350737
36 180.604986798149
37 165.579975416353
38 162.419227443925
39 133.941826310616
40 155.490599872946
41 135.738096887736
42 150.505442644185
43 138.595388734452
44 157.674154642691
45 184.565882202139
46 161.524864372152
47 184.730423924643
48 172.125003642551
49 197.828157778577
50 180.555412697761
51 201.537455444572
52 188.977554493297
53 212.519232738384
54 201.03872336866
55 245.331138276433
56 267.309102064291
57 251.39220908268
58 290.911146901773
59 292.939789819742
60 287.202260158589
61 280.626081352647
62 257.222746992687
63 283.837424858594
64 289.283992417741
65 283.392224881863
66 290.994422239212
67 279.368913308433
68 289.168315186605
69 269.407973254987
70 287.194660708351
71 265.215795011724
72 284.232659620031
73 262.031467646977
74 279.755427204008
75 282.776093236866
76 264.663995076061
77 277.618388756023
78 255.67943306531
79 269.95613308477
80 85.3675366676488
};
\addplot [semithick, color4]
table {%
1 117.753616955784
2 140.096734447092
3 210.955305760565
4 291.290558884373
5 322.374252475663
6 346.308912385035
7 358.934307885092
8 365.569401666408
9 362.228633013719
10 365.491213130088
11 351.764170530614
12 365.513976543558
13 356.51006636287
14 358.483032810313
15 360.331600385888
16 366.06317577383
17 404.152662465961
18 404.22523077824
19 433.361989874499
20 398.779543230203
21 393.200716520242
22 389.518417745015
23 377.415805281873
24 376.996648499028
25 370.710844490653
26 369.733518933937
27 360.72351482812
28 359.218077675397
29 358.157893349394
30 351.393321439998
31 342.756414684638
32 341.671085845359
33 338.079989639304
34 336.866542594071
35 329.344496512218
36 326.409492532596
37 324.085396957927
38 337.038788721945
39 333.026667207785
40 299.197609802203
41 325.699993232918
42 341.187685996075
43 323.064053555636
44 315.140941566907
45 333.071780358996
46 347.14165112838
47 343.385277512495
48 335.725750253644
49 347.790153963553
50 346.696702351017
51 359.335165745368
52 344.401870869938
53 356.953427899225
54 374.293546512503
55 388.625585011314
56 388.321875506802
57 411.540823204846
58 409.135210120553
59 401.165670793659
60 398.519025309678
61 377.045324620681
62 390.301807664127
63 389.802355027771
64 387.742466205828
65 385.22353202257
66 387.634850831863
67 373.670621503679
68 382.334067238111
69 381.949305037196
70 382.048419217544
71 372.106689292694
72 380.898630474972
73 378.4392533452
74 376.060155687147
75 373.468467966419
76 375.388819236497
77 378.293554507341
78 371.176000433039
79 368.696425263358
80 85.3675366676488
};
\end{axis}

\end{tikzpicture}
    \caption{Split-and-Join}
    \label{subfig:evaluation_split_and_join}
  \end{subfigure}
  \caption{View rewards plotted for each planner for select scenarios:
    In all cases, peaks correspond to when actors are near each other and
    troughs to when they are far apart.
    Plots show SRPPA as approximated by \eqref{eq:pixelterm},
    except for (\subref{subfig:evaluation_split_and_join})
    \emph{split-and-join}
    we also compare to the \emph{image-based} evaluation
    (Sec.~\ref{sec:rendering_evaluation}).
  }
  \label{fig:view_rewards}
\end{figure*}

\begin{table}
    \resizebox{\linewidth}{!}{
    \tiny
    \setlength{\tabcolsep}{3pt}
    \pgfplotstabletypeset[
      every head row/.style={
        output empty row,
        before row={
          \toprule
          \textbf{Scenario}
          &
          \multicolumn{2}{c}{\textbf{Formation}}
          &
          \multicolumn{2}{c}{\textbf{Assignment}}
          &
          \multicolumn{2}{c}{\textbf{Myopic}}
          &
          \multicolumn{2}{c}{\textbf{Greedy}}
          &
          \multicolumn{2}{c}{\textbf{Multi-Round}}
          \\
          &SRPPA$\times 10^2$&Image$\times 10^3$&SRPPA&Image&SRPPA&Image&SRPPA&Image&SRPPA&Image
          \\
        },
        after row=\midrule
      },
      every last row/.style={
        after row=\bottomrule
      },
      every even column/.style={
        precision={0}
      },
      every even column/.style={
        precision={2}
      },
      display columns/0/.style={
        column type={l},
        string type,
      },
      %% Best entries
      % cluster
      % every row 0 column 7/.style={
      %   postproc cell content/.style={
      %     @cell content/.add={$\bf}{$}
      %   }
      % },
      every row 0 column 9/.style={
        postproc cell content/.style={
          @cell content/.add={$\bf}{$}
        }
      },
      every row 0 column 7/.style={
        postproc cell content/.style={
          @cell content/.add={$\bf}{$}
        }
      },
      every row 0 column 2/.style={
        postproc cell content/.style={
          @cell content/.add={$\bf}{$}
        }
      },
      % cross mix
      every row 1 column 10/.style={
        postproc cell content/.style={
          @cell content/.add={$\bf}{$}
        }
      },
      every row 1 column 9/.style={
        postproc cell content/.style={
          @cell content/.add={$\bf}{$}
        }
      },
      % four split
      every row 2 column 9/.style={
        postproc cell content/.style={
          @cell content/.add={$\bf}{$}
        }
      },
      every row 2 column 8/.style={
        postproc cell content/.style={
          @cell content/.add={$\bf}{$}
        }
      },
      % priority runners
      every row 3 column 9/.style={
        postproc cell content/.style={
          @cell content/.add={$\bf}{$}
        }
      },
      every row 3 column 4/.style={
        postproc cell content/.style={
          @cell content/.add={$\bf}{$}
        }
      },
      % priority speaker
      every row 4 column 9/.style={
        postproc cell content/.style={
          @cell content/.add={$\bf}{$}
        }
      },
      every row 4 column 8/.style={
        postproc cell content/.style={
          @cell content/.add={$\bf}{$}
        }
      },
      % split and join
      every row 5 column 9/.style={
        postproc cell content/.style={
          @cell content/.add={$\bf}{$}
        }
      },
      every row 5 column 10/.style={
        postproc cell content/.style={
          @cell content/.add={$\bf}{$}
        }
      },
      % spreadout group
      every row 6 column 9/.style={
        postproc cell content/.style={
          @cell content/.add={$\bf}{$}
        }
      },
      every row 6 column 7/.style={
        postproc cell content/.style={
          @cell content/.add={$\bf}{$}
        }
      },
      every row 6 column 10/.style={
        postproc cell content/.style={
          @cell content/.add={$\bf}{$}
        }
      },
      % track runners
      every row 7 column 9/.style={
        postproc cell content/.style={
          @cell content/.add={$\bf}{$}
        }
      },
      every row 7 column 10/.style={
        postproc cell content/.style={
          @cell content/.add={$\bf}{$}
        }
      },
      % % heavy mixing
      % every row 8 column 9/.style={
      %   postproc cell content/.style={
      %     @cell content/.add={$\bf}{$}
      %   }
      % },
      % every row 8 column 10/.style={
      %   postproc cell content/.style={
      %     @cell content/.add={$\bf}{$}
      %   }
      % },
      %% done highlighting maxima
      col sep=comma,
      empty cells with={--} % leave empty cells as empty
    ]{figures/data/data_summary.csv}
  }

    \caption{Complete tabular results: Each entry corresponds to the sum of view
    rewards over the planning horizon for a given scenario and planner.}%
    \label{table:results}
\end{table}

\section{Results and discussion}
\label{sec:results}

% Each planner is evaluated in a variety of simulated sensing experiments designed to mimic real world actor filming scenarios. Planners are evaluated with regards to two different metrics representing sensing quality. The first being a direct evaluation of our SRPPA evaluation, and the other being image based that accumulates SRPPA directly. The SRPPA evaluation is distinct in that being a direct measure of our objective, we can see directly how the sensor model and its limitations perform across the different planners. The image based evaluation method however represents a more realistic camera model for each scene, which uses 3D camera projection matrices to achieve a realistic model of perspective and field of view, and ray tracing to appropriately account for actor-actor occlusions. \skyler{This seems redundant}

\
% - Very brief recap on evaluation and notes on how the two methods compare
% - Perhaps explain how each planner did overall, what their strengths and weaknesses are, and how we compare against those.
% - Quantitative focused discussion on the measured performance of the methods
% - Qualitative discussion and other observations regarding behavior of planners.
% - Possibly note any caveats or issues.
% - Also check that it matches with our stated contributions

% Par 1: Want to emphasize that we met the desired behavior as outline in
% the objective because of such and such

% qualitative
We are interested in observing how well maximizing our SRPPA objective achieves
the intuition laid out in the start of Sec.~\ref{sec:objective} both
qualitatively and quantitatively.
To quantify this, we compare planners based on the evaluation methods in
Sec.\ref{sec:evaluation} in terms of overall performance in
Tab.~\ref{table:results} and as a function of time in
Fig.~\ref{fig:view_rewards}.
% To achieve these in all of our scenarios requires flexibility in robot coordination.
Our planners achieve desired behaviors across all of the
scenarios and make significant performance gains against our
baselines according to the evaluation.
% Planner performance varied widely across the scenarios evaluated.
While all planners tend to achieve similar view quality when actors are together
as for the \textit{cluster} evaluations in
Tab.~\ref{table:results}, we see more variation in performance
with the more complex scenarios.
Figure~\ref{fig:view_rewards} showcases variation in results across three
challenging scenarios that involve unstructured actor mixing, splitting, and
joining.
In these scenes, we see that our planners achieve consistent performance gains
over the baselines, especially in moments of high target separation, such as the
middle portion of \textit{split-and-join}, or when the group motion is highly
unstructured, such as throughout \textit{track-runners}.
In those cases, we observe that the performance of the formation planner is tied
closely to whether the actor distribution is small and circular, and the
assignment planner performs well when assignments correlate with the actor
distribution.
Since our planning approach imposes few constraints on the robots' motions it is
able to adapt naturally to the complex movements in each scenario.
We also find both evaluation metrics rate our planners highly.
Table~\ref{table:results} highlights the best performing planners in each
scenario; greedy and multi-round-greedy consistently achieve the highest
scores.
Providing effective and intuitive coordination via submodular maximization
of the view reward is one of our key results.

% We see that throughout the evolution of the scenario robots naturally maximize actor coverage, view diversity, and actor centering; effectively capturing this complex scene.
% Fig.~\ref{fig:qualitative}, showcases a full solution for our multi-round planner in the \textit{cross-mix} scenario.  Throughout the evolution of \textit{cross-mix}, we see that the robots quickly
% without any hand tuned behavior controls such as formation structure, or actor assignment.
% The performance of the planners across our scenarios illuminate what kinds of unstructured filming situations are challenging for our baseline planners, and the value of flexible coordination.

% Table~\ref{table:results} demonstrates the overall performance of each planner for each scenario. The SRPPA and rendering evaluation both rank our planners highly, with our planners consistently achieving the highest scores. This is plainly seen in the

% The multi-round planner does not make a major improvement over greedy in the SRPPA evaluation, but does occasionally make noticeable improvements in image evaluation, as shown by Fig.~\ref{fig:view_rewards}, in which the solution produced by multi-round achieved higher image evaluations than greedy, during the separation between actor groups.

% comparison between evals
However, the SRPPA evaluation does not consider true 3D camera perspective or
actor-actor occlusions.
Despite this, in Tab.~\ref{table:results} we see both methods of evaluation
align closely in relative performance rankings between planners.
Figure~\ref{subfig:evaluation_split_and_join} also showcases time series plots
for \textit{split-and-join} for both the SRPPA evaluation and the rendering
based evaluation.
The plots are similar in structure, and relative performance of different
planners is largely consistent.
%The metrics also agree quantitatively in overall performance metrics, in
%That the image evaluations correlate strongly with SRPPA despite the simplicity
%of our sensor model, and that our planners achieve such high relative image
%evaluations are both key results of this paper.

\section{Conclusions}

\begin{comment}
\begin{itemize}
    \item This is a sufficiently complex submodular optimization problem; relevant and interesting applications; Breadth and consistence in performance
    \item Future work: Collision constraints and such
\end{itemize}
\end{comment}

In this work, we proposed a new method to plan for a team of robots to
film groups of moving actors that may execute complex scripted
trajectories such as splitting, spreading out, and reorganizing such as might
appear in sport, theater, or dance.
Filming these behaviors challenges systems based on assignment or formation
planning so we instead optimize total view quality directly.
Toward this end, we presented the SRPPA objective which is a function of pixel
densities over the surfaces of the actors, and we proved that this objective is
submodular.
As such, we proposed planning for the multi-robot team via greedy methods for
submodular optimization.
Our results demonstrate that planning via greedy submodular optimization meets
or exceeds performance of assignment and formation baselines in all scenarios.
Moreover, our approach also produces intuitive behaviors implicitly such as
splitting formations when groups spread apart or changing assignments when
actors cross or rearrange.

%A natural extension of this work is to provide an implementation that accounts
%for occlusions and non-collision constraints between robots and with the
%environment~\citep{suresh2024iros}.
%Although some aspects of the problem formulation do not transfer---the
%feasible set for the multi-robot problem no longer forms a partition
%matroid---the same general planning approach applies to this more sophisticated
%setting.
%We are also excited to continue studying related problems as applications for
%submodular optimization methods because multi-robot coordination does not always
%significantly impact performance even compared to no coordination at
%all~\cite{corah_performance_2022}
%and because multi-robot filming produces large spaces of trajectories that
%require close coordination of robots' views over most or all of the duration.

%%%%%%%%%%%%%%%%%%%%%%%%%%%%%%%%%%%%%%%%%%%%%%%%%%%%%%%%%%%%%%%%%%%%%%%%%%%%%%%%

%\nocite{*}  % Without this, cite articles in text using \cite{...}
\bibliographystyle{IEEEtranN}
{
\scriptsize
%\balance
\bibliography{./IEEEfull,micah_abbreviations,refs}
}

\clearpage

\appendices

\ifextended
\section{Proofs and monotonicities}
\label{appendix:proofs}

This appendix provides the proof of the main theoretic result in this paper,
Theorem~\ref{theorem:submodular_objective}.
On the way to that result, we will also provide some background information on
derivatives of set functions and monotonicities and a key incremental result
(Lemma~\ref{lemma:real_composition}).

\subsection{Monotonicity and derivatives of set functions}
\label{sec:monotonicity}

Section~\ref{section:background_submodularity} provided a basic background on
submodular and monotonic functions.
Our analysis will rely on the slightly more general foundation of
higher-order derivatives and monotonicities which generalize monotonicity and
submodularity~\citep{foldes2005,corah2020phd}.
Our exposition will follow the notation of our prior work
\citep[Section~3.5.1]{corah2020phd} and foundations by~\citet{foldes2005}.
We will write these higher-order monotonicities in terms of derivatives of set
functions, and such derivatives are as follows:

\begin{definition}[Derivative of a set function]
  \label{definition:set_derivative}
  The $n^\mathrm{th}$ derivative of a set function
  $\setfun : \ground \rightarrow \real$
  at $X\subseteq \ground$
  with respect to some disjoint sets $Y_1,\ldots,Y_n$
  can be written recursively as
  \begin{align*}
    &\setfun(Y_1;\ldots;Y_n|X) = \\
    &\setfun(Y_1;\ldots;Y_{n-1}|X,Y_n) - \setfun(Y_1;\ldots;Y_{n-1}|X),
  \end{align*}
  defining the base case as $\setfun(\cdot|X)=\setfun(X)$.
\end{definition}
We can then define monotonicity conditions in terms of derivatives of set
functions:

\begin{definition}[Higher-order monotonicity of set functions]
  \label{definition:set_monotonic}
%Should be just a statement and a citation (probably Foldes and Hammer)
%1-increasing, n-increasing \& decreasing for n > 1 ()
A set function $\setfun$ is $m$-increasing if
\begin{align*}
    \setfun(Y_1;\ldots;Y_m|X) &\geq 0\\
    \intertext{always holds or respectively $m$-decreasing if}
    \setfun(Y_1;\ldots;Y_m|X) &\leq 0
\end{align*}

\end{definition}
\noindent
Based on this definition, monotonicity and submodularity are equivalent
to a set function being 1-increasing and 2-decreasing, respectively.

Functions of real numbers can also satisfy similar monotonicity conditions.
We will use the following definition of monotonicity: %\micah{find a source}
\begin{definition}[Monotonicity of real functions]
  \label{definition:real_monotonic}
  Consider a real function $\realfun : \real \rightarrow \real$.
  Then, $f$ is $m$-increasing if% for $x<y$
  \begin{align*}
    \realfun^{(m)}(x) \geq 0
  \end{align*}
  or respectively $m$-decreasing if
  \begin{align*}
    \realfun^{(m)}(x) \leq 0
  \end{align*}
  where $\realfun^{(m)}$ refers to the $m^\mathrm{th}$ derivative of $\realfun$.
  %\micah{find a good citation for monotonicity conditions.
  %A real analysis textbook?}
\end{definition}

\subsection{Modular functions}

The view reward \eqref{eq:view_quality} of the SRPPA objective initially
computes a sum based on pixel densities associated with each face
($\pixelterm$).
Set functions that can be written as a sum over weights for an input set like
so are modular.

\begin{definition}[Modular set function]
  \label{definition:modular}
  A set function  $\setfun : 2^\ground \rightarrow \real$
  is modular if it is both submodular and supermodular.
  All such functions can be written as a sum of weights and an offset
  \begin{align}
    \setfun(X) = k + \sum_{x\in X} w_x
    \label{eq:modular}
  \end{align}
  where $X\subseteq \ground$ and $w_x \in \real$.
  This is easy to see as $\setfun(c|A) = \setfun(c|B) = \setfun(c)-k = w_c$ for
  $B\subseteq A \subseteq \ground$ and $c \in \ground \setminus A$ following the
  discussion of submodularity in Sec.~\ref{section:background_submodularity}.
\end{definition}
As a consequence,
any modular set function
satisfies
\begin{align}
  \setfun(A,B) = \setfun(A) + \setfun(B)
  \label{eq:modular_additive}
\end{align}
for any disjoint sets $A$ and $B$.
A modular function is also monotonic \emph{if and only if}
$w_x \geq 0$ for $x\in\ground$.
Modular functions are also normalized $\setfun(\emptyset)=0$ if $k=0$.
%Further, modular functions are well-known to be both submodular and supermodular
%(that is $-\setfun$ is submodular).

Later, we will prove that composing a monotonic modular function with a real
function
(i.e. a square-root)
produces a set function with the same monotonicity conditions as the
real function.

\subsection{Composing modular functions with real functions}
\label{sec:composing_modular_real}

A key insight of this work is to use composition with real functions such as the
square-root
(as in $\viewreward$ \eqref{eq:view_quality})
to moderate saturation and redundancy of rewards for observations.
In order to obtain guarantees on solution quality,
we must understand when this operation maintains monotonicity properties.

\lemmarealcomposition*

\begin{proof}
  Assume that we can write the derivative of $\hat\setfun$ in the following
  form:
  \begin{align}
    \begin{split}
      &\hat\setfun(Y_1;\ldots;Y_m|X) = \\
      &
      \int_0^{\setfun(Y_1)}
      \cdots
      \int_0^{\setfun(Y_m)}
      \realfun^{(m)}\left(\setfun(X) + \sum\limits_1^m s_m\right)
      \,\mathrm{d}s_m\,\ldots\,\mathrm{d}s_1.
    \end{split}
    \label{eq:composed_derivative}
  \end{align}
  We will later prove this statement by induction.
  This integral is non-negative
  if $\realfun$ is $m$-increasing or else non-positive if $m$-decreasing
  (Def.~\ref{definition:real_monotonic}) and because
  the bounds of the integral satisfy $0 \leq \setfun(Y_i)$ for
  $i\in\{1,\ldots,m\}$ because $\setfun$ is modular and monotonic
  (Def.~\ref{definition:modular}).
  If $\realfun$ is $m$-increasing, then we have
  $\hat\setfun(Y_1;\ldots;Y_m|X) \geq 0$,
  and so $\hat\setfun$ is also $m$-increasing by
  Def.~\ref{definition:set_derivative}.
  Likewise, if $\realfun$ is $m$-decreasing, $\hat\setfun$ is also
  $m$-decreasing.

  All that remains is to prove that we can write the derivative of $\hat\setfun$
  in the form of \eqref{eq:composed_derivative}.
  We can prove this by induction, starting with the base case of $m=1$.
  The derivative is
  \begin{align}
    \hat\setfun(Y_1|X)
    &= \hat\setfun(Y_1,X) - \hat\setfun(X)
    %&=
    %\int_0^{\setfun(Y_1,X)} \realfun'(s) \mathrm{d}s
    %-
    %\int_0^{\setfun(X)} \realfun'(s) \mathrm{d}s \\
    =
    \int_{\setfun(X)}^{\setfun(Y_1,X)} \realfun'(s) \mathrm{d}s
  \intertext{%
    from the definition in Def.~\ref{definition:set_monotonic} and by applying
    the \emph{fundamental theorem of calculus.}
    Then, because $\setfun$ is modular
  }
    \hat\setfun(Y_1|X)
    &=
    \int_{0}^{\setfun(Y_1)} \realfun'(\setfun(X) + s) \mathrm{d}s.
    \label{eq:base_case_derivative}
  \end{align}
  This completes the base case for \eqref{eq:composed_derivative}.
  For the inductive step, we can use \eqref{eq:composed_derivative}
  to write the $m+1$ derivative as
  \begin{align}
    \begin{split}
    &\hat\setfun(Y_1;\ldots;Y_{m+1}|X) \\
    &=
    \hat\setfun(Y_1;\ldots;Y_m|X,Y_{m+1}) - \hat\setfun(Y_1;\ldots;Y_m|X)
    \end{split}
    \\
    \begin{split}
    &=
    \int_0^{\setfun(Y_1)}
    \mkern-20mu\cdots
    \int_0^{\setfun(Y_m)}
    \mkern-10mu
    \realfun^{(m)}\!\!\left(\!
      \setfun(X,Y_{m+1}) + \sum\limits_{i=1}^m s_i
    \!\right)\!\!
    \,\mathrm{d}s_m\,\!\ldots\,\!\mathrm{d}s_1 \\
    &\quad
    -
    \int_0^{\setfun(Y_1)}
    \mkern-20mu\cdots
    \int_0^{\setfun(Y_m)}
    \mkern-10mu
    \realfun^{(m)}\left(\setfun(X) + \sum\limits_{i=1}^m s_i\right)
    \,\mathrm{d}s_m\,\ldots\,\mathrm{d}s_1.
    \end{split}
    \intertext{Combining the terms of the integrands produces}
    \begin{split}
    &=
    \int_0^{\setfun(Y_1)}
    \mkern-20mu\cdots
    \int_0^{\setfun(Y_m)}
    \realfun^{(m)}\left(\setfun(X,Y_{m+1}) + \sum\limits_{i=1}^m s_i\right)
    \\
    & \qquad\qquad
    -
    \realfun^{(m)}\left(\setfun(X) + \sum\limits_{i=1}^m s_i\right)
    \,\mathrm{d}s_m\,\ldots\,\mathrm{d}s_1.
    \end{split}
    \intertext{Now, we can obtain the next integral (with the same approach as
      the base case) to re-write this derivative in the desired form}
    %&=
    %\int_0^{\setfun(Y_1)}
    %\mkern-20mu\cdots
    %\int_0^{\setfun(Y_m)}
    %\int_0^{\setfun(Y_{m+1})}
    %\realfun^{(m+1)}\left(\setfun(X) + s_{m+1} + \sum\limits_{i=1}^m s_i\right)
    %\mathrm{d}s_{m+1}\,\mathrm{d}s_m\,\ldots\,\mathrm{d}s_1 \\
    &=
    \int_0^{\setfun(Y_1)}
    \mkern-20mu\cdots
    \int_0^{\setfun(Y_{m+1})}
    \mkern-10mu
    \realfun^{(m+1)}
    \!\left(\!
      \setfun(X) \!+\! \sum\limits_{i=1}^{m+1} s_i
    \!\right)\!
    \mathrm{d}s_{m+1}\ldots\mathrm{d}s_1.
  \end{align}
  And so, we obtain the form for $m+1$
  by assuming \eqref{eq:composed_derivative} for $m$.
  Then, given the base case \eqref{eq:base_case_derivative} of $m=1$
  \eqref{eq:composed_derivative} must hold for all $m\geq1$ by induction.
\end{proof}

\subsection{Proof of Theorem~\ref{theorem:submodular_objective}}
\label{sec:theorem_proof}

Here we prove our main result regarding properties of $\objective$
which ensures bounded suboptimality for our multi-robot planning approach.

\monotonicitiesofsrppa*

\begin{proof}
The proof proceeds in two parts.
First, we prove that SRPPA satisfies alternating monotonicity conditions
and then that this objective is submodular.

\subsubsection{SRPPA satisfies alternating monotonicity conditions}

We can begin by applying Lemma~\ref{lemma:real_composition} so that
$\hat\setfun=\viewreward$ \eqref{eq:view_quality}
and $\setfun=\pixelterm$ \eqref{eq:pixelterm}.
By inspection, we can see that $\viewreward$ was formed by composition with
$\pixelterm$, that $\realfun(x) = p_f A_f \sqrt{x}$, and that $\realfun$ is
defined for $\real_{\geq 0}.$
Additionally, $\pixelterm$ is modular (Def.~\ref{definition:modular}):
the summands in \eqref{eq:pixelterm} are \emph{non-negative} and correspond to
the weights in $\eqref{eq:modular}$.

Therefore, we can apply Lemma~\ref{lemma:real_composition}, and $\viewreward$
has monotonicity properties that match $\realfun$.
The first derivative is $\dot\realfun(x)=p_f A_F x^{-1/2} \geq 0$, and we can
conclude that $\viewreward$ is 1-increasing (monotonic).
Following the \emph{power rule}, we can see that the derivatives of $\realfun$
alternate signs and conclude that $\viewreward$ is submodular
and furthermore $n$-increasing for odd values
of $n$ and $n$-decreasing for even values.

Finally, we must prove that the sum of terms in $\objective$
\eqref{eq:objective} has these same monotonicity properties.
\citet[Sec.~4]{foldes2005} observe that the classes of monotonicity properties
of set functions are closed under conic combination
(linear combinations with non-negative coefficients).
The view rewards $\viewreward_{f,t}$ all satisfy alternating monotonicity
conditions and so does their sum.
Additionally, $\pathreward$ is modular by matching against \eqref{eq:modular}.
As such, $\pathreward$ must be monotonic, submodular, and supermodular
(Def.~\ref{definition:modular}).
Again, following~\citep{foldes2005} modular (or linear) functions have degree 1
\citep[Sec.~4]{foldes2005}
and so further belong to classes of both $n$-increasing and $n$-decreasing
functions for $n\geq 2$ \citep[Sec.~2]{foldes2005}.\footnote{%
  Another path to this conclusion is to observe that for a modular function,
  derivatives (Def.~\ref{definition:set_derivative} of order $n=2$
  and further $n\geq2$ are all identically zero.
}
Therefore, $\pathreward$ satisfies all necessary monotonicity
conditions (and more), and we conclude that the sum of all terms and
so also $\objective$ satisfy alternating monotonicity conditions as stated.

\subsubsection{SRPPA is normalized}
Refer to \eqref{eq:objective}, and recall that $\objective$ is normalized if
$\objective(\emptyset)=0$.
Because $\pathreward$ is modular with zero offset that term is normalized
(Def.~\ref{definition:modular}).
Then, $\viewreward$ \eqref{eq:view_quality} is also clearly normalized because
$\pixelterm$ \eqref{eq:pixelterm} is normalized (modular with zero offset) and
because $\sqrt{0}=0$.
The sum of these terms is therefore also normalized.
\end{proof}
\else\fi

\end{document}